\documentclass{article}

\usepackage{microtype}
\usepackage{graphicx}
\usepackage{subfigure}
\usepackage{booktabs} 
\usepackage[table]{xcolor}

\usepackage{acronym}
\acrodef{SSL}[SSL]{self-supervised learning}
\acrodef{AI}[AI]{artificial intelligence}
\acrodef{DM}[LDM]{latent distribution matching}
\acrodef{MI}[MI]{mutual information}
\acrodef{ICA}[ICA]{independent component analysis}
\acrodef{SFA}[SFA]{slow feature analysis}
\acrodef{stopgrad}[stopgrad]{stop gradient}
\acrodef{JEPA}[JEPA]{joint-embeddding predictive architecture}
\acrodef{CPC}[CPC]{contrastive predictive coding}
\usepackage{fontawesome}

\usepackage{hyperref}

\usepackage{titletoc}

\usepackage[accepted]{icml2026}

\usepackage{amsmath}
\usepackage{amssymb}
\usepackage{mathtools}
\usepackage{amsthm}
\newcommand{\norm}[1]{\left\lVert#1\right\rVert}
\usepackage{enumitem}
\usepackage{empheq}
\usepackage{bbold}

\usepackage[capitalize,noabbrev]{cleveref}

\theoremstyle{plain}
\newtheorem{theorem}{Theorem}

\newtheorem{lemma}{Lemma}
\newtheorem{corollary}{Corollary}
\theoremstyle{definition}
\newtheorem{definition}{Definition}

\theoremstyle{remark}
\newtheorem{remark}{Remark}

\usepackage{etoolbox}
  \apptocmd{\endproof}{\bigskip}{}{}

\newcounter{note}
\newenvironment{note}{\refstepcounter{note}
\noindent\textit{Note}~\arabic{note}.\ 
}{
}

\usepackage[textsize=tiny]{todonotes}

\defcitealias{bardes2024revisiting}{Bardes 2024}
\defcitealias{assran2025v}{Assr. 2025}
\defcitealias{mohammadi2025understanding}{Moham. 2025}

\icmltitlerunning{Understanding Self-Supervised Learning via Latent Distribution Matching}

\begin{document}

\twocolumn[
\icmltitle{Understanding Self-Supervised Learning via Latent Distribution Matching}




\begin{icmlauthorlist}
\icmlauthor{Fabian A Mikulasch}{fmi}
\icmlauthor{Friedemann Zenke}{fmi,uni}
\end{icmlauthorlist}

\icmlaffiliation{fmi}{Friedrich Miescher Institute for Biomedical Research, 4056 Basel, Switzerland}
\icmlaffiliation{uni}{Faculty of Science, University of Basel, 4033 Basel, Switzerland}

\icmlcorrespondingauthor{Fabian A Mikulasch}{fabian.mikulasch+ldm@fmi.ch}
\icmlcorrespondingauthor{Friedemann Zenke}{friedemann.zenke+ldm@fmi.ch}

\icmlkeywords{Self-supervised learning, Contrastive Learning, Nonlinear ICA, Identifiability, System identification, Bayesian filtering, Manifold normalizing flows, Injective flows}

\vskip 0.3in
]



\printAffiliationsAndNotice{}  

\begin{abstract}
Self-supervised learning (SSL) excels at finding general-purpose latent representations from complex data, 
yet lacks a unifying theoretical framework that explains the diverse existing methods and guides the design of new ones.
We cast SSL as \ac{DM}: learning representations that maximize their log-probability under an assumed latent model (alignment), while maximizing latent entropy to prevent collapse (uniformity).
This view unifies independent component analysis with contrastive, non-contrastive, and predictive SSL methods, including \ac{stopgrad} approaches.
Leveraging \ac{DM}, we derive a nonlinear, sampling-free Bayesian filtering model with a Kalman-based predictor for high-dimensional timeseries.
We further prove that predictive \ac{DM} yields identifiable latent representations under mild assumptions, even with nonlinear predictors. 
Overall, \ac{DM} clarifies the assumptions behind established SSL methods and provides principled guidance for developing new approaches.
\end{abstract}


\acresetall

\section{Introduction}

\Ac{SSL} has become a central paradigm for representation learning, enabling models to extract useful structure from unlabeled data across vision, language, and audio domains \cite{noroozi_unsupervised_2016, oord_representation_2019,dawid2024introduction,gui2024survey}. 
By learning from relationships between multiple views of the same data, such as augmentations, temporal neighbors, or masked context, modern SSL methods often provide more useful representations for downstream tasks than 
traditional likelihood-based approaches like autoencoders, which emphasize low-level reconstruction and can overfit to superficial details.

Among \ac{SSL} approaches, latent predictive models have recently achieved state-of-the-art performance on temporal and sequential data \cite{oord_representation_2019,bardes2024revisiting,assran2025v}. 
Here, instead of predicting upcoming data points as in temporal generative models, the goal is to predict future latent representations, while preventing representational collapse.
Despite the empirical success of predictive \ac{SSL}, these models typically rely on heuristic objectives or regularization strategies, while our understanding of \emph{why} they work, and how to design them systematically, remains incomplete \cite{reizinger2025position}.

\begin{figure}[t]
    \centering
    \includegraphics{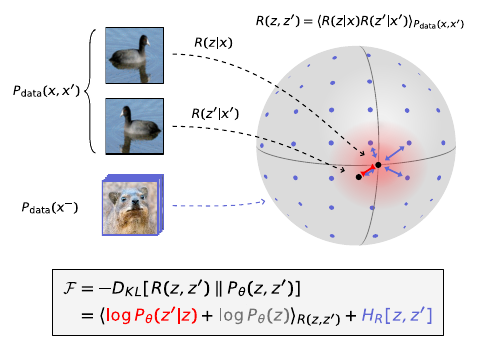}
    \caption{We formulate SSL as a distribution matching problem in which the transformed data distribution $R(z,z')$ is matched to the latent model $P_\theta(z,z')$. 
    The transformation is deterministic $R(z|x)=\delta(z-f(x))$, where $f(x)$ is a deep network.
    The model likelihood $\log P_\theta$ and latent entropy $H_R$ correspond to alignment and uniformity terms in the loss function \cite{wang2020understanding}.}
    \label{fig:approach}
    \vspace{-1em}
\end{figure}

A common way to interpret \ac{SSL} is through a geometric alignment perspective \cite{wang2020understanding}: representations of paired views are encouraged to be close in latent space, while additional ``uniformity" or repulsion terms in the loss prevent representational collapse.
Although this view offers intuition and has influenced practical designs, it does not constitute a formal statistical foundation for SSL.
In particular, it provides limited guidance for models that do not explicitly rely on contrastive or repulsive terms, such as BYOL 
\citep{grill2020bootstrap} or SimSiam \cite{chen2020simple}.

Another influential line of work frames SSL as \ac{MI} maximization between paired views to preserve shared information while discarding noise \cite{oord_representation_2019,tian2020contrastive,shwartz2023information,galvez2023role,shwartz_ziv_compress_2024}. 
While appealing, MI-based interpretations face a fundamental limitation: MI is invariant under arbitrary invertible transformations $\phi$ and $\psi$, i.e.,
\begin{align} \begin{split} \label{eq:MI_invariance}
    I[x,y]= I[\phi(x),\psi(y)] 
\end{split} \end{align}
and, therefore, does not by itself promote semantically meaningful or identifiable representations.
Empirically, MI maximization has proven neither necessary nor sufficient for successful SSL \cite{tschannen2019mutual}, leaving its precise role elusive.

In contrast, formal guarantees for successful representations in SSL stem from the \acf{DM} principle \cite{zimmermann_contrastive_2022,khemakhem2020ice}.
If a learned latent distribution matches the distribution of the ``true" underlying latent variables, then---under assumptions that depend on the model---the representations can recover these variables up to trivial equivalences, such as permutations or affine transformations. 

Motivated by these results, we ask whether \ac{DM} can serve as a unifying objective for SSL rather than an implicit or auxiliary principle. 
Our main contributions are:
\begin{description}[style=unboxed,leftmargin=0cm,nolistsep]
\setlength\itemsep{4pt}

\item[A unified distribution-matching framework for SSL.] We formulate \ac{SSL} as matching a data distribution to an assumed latent model (Fig.~\ref{fig:approach}) and show how a wide range of existing \ac{SSL} algorithms can be recovered as instances of \ac{DM} (Table~\ref{tab:mapping}), with differences between methods arising naturally from distinct choices of latent models or entropy estimators.  

\item[Clarifying the role of MI maximization.] We show that in \ac{DM}, \ac{MI} maximization contributes little to representation quality beyond entropy maximization because it is invariant to arbitrary invertible transformations. Instead, approximate MI maximization implicitly performs distribution matching.

\item[Derivation of new SSL objectives.] We show how \ac{DM} naturally leads to new \ac{SSL} variants with beneficial properties. As an example, we derive a predictive \ac{SSL} model with Kalman-based latent dynamics combined with a deep encoder, thereby enabling principled uncertainty quantification over latent states.

\item[Identifiability guarantees for predictive SSL.] We provide identifiability results for predictive SSL models trained via \ac{DM}, showing that they can recover underlying latent variables up to affine transformations, thereby providing a formal explanation for their strong empirical performance.
\end{description}

\section{Prior work}

We organize the prior work according to theoretical approaches for understanding \ac{SSL}, notable identifiability results on \ac{SSL}, and previous use of \ac{DM} in \ac{SSL}.

\paragraph{Theoretical perspective on SSL.}
There has been significant interest in understanding SSL through theoretical analysis \citep[e.g.,][]{Arora_Khandeparkar_Khodak_Plevrakis_Saunshi_2019,wang2020understanding,Ben-Shaul_Shwartz-Ziv_Galanti_Dekel_LeCun_2023}.
A line of work closely related to this article aims to connect contrastive SSL (e.g., CPC) to latent variable models \cite{kugelgen_self-supervised_2022,zimmermann_contrastive_2022,aitchison_infonce_2023,Nakamura_Okada_Taniguchi_2023, bizeul_probabilistic_2024}.
Others proposed information-theoretic interpretations of regularization-based SSL (e.g., VICReg) \cite{shwartz2023information} and clustering/contrastive SSL \cite{galvez2023role}.
Finally, \acs{stopgrad}-based SSL was analyzed using several approaches \cite{tian2021understanding,srinath2023implicit,Nakamura_Okada_Taniguchi_2023}, 
yet a unified treatment is lacking.

\paragraph{Identifiability in SSL and ICA.}
Identifiability is a core concept in linear \ac{ICA} \cite{hyvarinen1999independent}, which guarantees the recovery of "true" underlying variables up to trivial transformations.
The concept was later extended to nonlinear \ac{ICA} using assumptions such as temporal structure or auxiliary variables \cite{sprekeler2014extension,khemakhem2020variational,hyvarinen2019nonlinear,roeder2021linear}. 
More recently, these insights also enabled proving identifiability for SSL models \cite{kugelgen_self-supervised_2022,zimmermann_contrastive_2022,daunhawer_identifiability_2023,reizinger2024cross,laiz2024self}.
However, most of these results were derived for specific algorithms, emphasizing the need for a more general framework.

\paragraph{Distribution matching approaches in SSL.}
\citet{zimmermann_contrastive_2022} implicitly showed that contrastive SSL can be understood as conditional \ac{DM} via the reverse KL-Divergence (Appendix~\ref{app:other}).
Other \ac{DM} approaches to SSL have been proposed from the perspective of optimal transport \cite{chen2024your,jiao2025distribution} or rate reduction \cite{ma2022principles}.
\citet{balestriero2025lejepa} proposed single-variable \ac{DM} based on an isotropic Gaussian as basis for SSL. In this nonlinear setting, single-variable \ac{DM} does not provide identifiability guarantees and must be augmented with additional loss terms.
Still, how one can use \ac{DM} to design identifiable SSL algorithms has not been explored.

\begin{figure*}[t]
    \centering
    \includegraphics{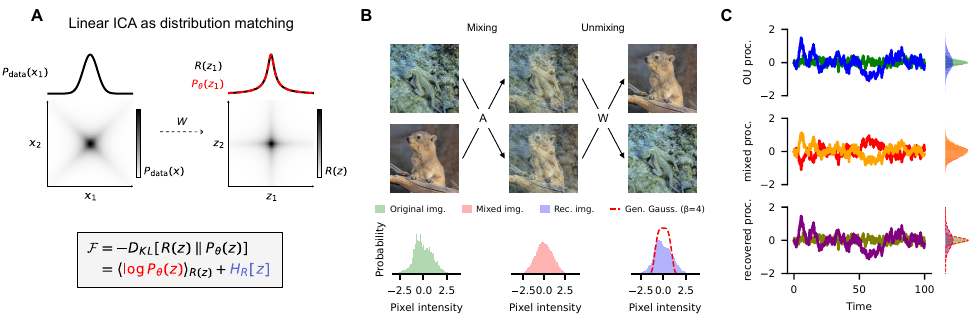}
    \caption{Source recovery with \ac{DM} in linear \ac{ICA}. 
    \textbf{A} Linear \ac{ICA} assumes that the data distribution has independent factors, that can be recovered by aligning them with the correct underlying independent distribution \cite{cardoso2002infomax}.
    \textbf{B}~Distributions of pixel intensities in natural images are non-Gaussian \citep[][]{hyvarinen1999independent}. 
    In contrast, mixed images are closer to Gaussian, as expected from the central limit theorem. 
    Disentanglement proceeds by learning $W$ to recover an assumed short-tailed distribution (red dashed line).
    \textbf{C} Also Gaussian sources can be disentangled, which, however, requires more assumptions on the data generating process. Here we recover the outputs of two Ornstein-Uhlenbeck processes assuming known variances and that $W$ is volume preserving (determinant of $|W|=1$).}
    \label{fig:ica}
\end{figure*}

\section{Distribution matching framework}

To motivate the \acf{DM} framework, we first revisit the maximum likelihood objective, which is also used, e.g., in linear \ac{ICA} \cite{hyvarinen1999independent} or normalizing flow networks \cite{papamakarios2021normalizing}.
In these approaches, models often compute the data-likelihood in latent, instead of data space, which is possible for an invertible data generation process $g\equiv f^{-1}$ \citep[see Appendix~\ref{app:mldm} and][ see Appendix~\ref{app:notation} for a summary of our notation]{papamakarios2021normalizing, cardoso2002infomax}
\begin{align} \begin{split}
    \left\langle \log P_\theta(x) \right\rangle_{P_\text{data}(x)} &\propto \underbrace{\left\langle \log P_\theta(f(x)) \right\rangle_{P_\text{data}(x)}}_{\text{Likelihood}} + \underbrace{H_{P_\text{data}}[f(x)] \vphantom{\left\langle \log P_\theta(f(x)) \right\rangle_{P_\text{data}(x)}} }_{\text{Entropy}} \\
    &= - D_{KL} [P_\text{data}(f(x))\parallel P_\theta(f(x))] \;, 
    \label{eq:likelihood_equivalence}
\end{split} \end{align}
where $P_\theta$ is the model distribution with parameters $\theta$. 
Importantly, this does not require the encoder $f$ to be invertible in general but only \textit{on the data manifold} (Appendix~\ref{app:mldm}), which is the same assumption underlying the identifiability results for SSL (Section~\ref{sec:proofs}).
Thus, for an encoder that is invertible on the data manifold, distribution matching in latent space is equivalent to maximum likelihood learning.

\subsection{Linear ICA as latent distribution matching}

To build intuition, we briefly review how linear \ac{ICA} can be understood as \ac{DM} \citep[see][for more thorough reviews on ICA]{choi2005blind,hyvarinen1999independent}.
Several linear \ac{ICA} variants, including InfoMax and likelihood-based methods, assume that the data can be disentangled with an unmixing matrix $W$ to match an assumed latent distribution $P_\theta(z)=\prod_i P_\theta(z_i)$ that factorizes over latent factors \cite{hyvarinen1999independent,cardoso2002infomax}.
This unmixing corresponds to an \ac{DM} goal (Fig.~\ref{fig:ica}A). If the data is linearly transformed as $z=Wx$, the transformed data distribution is $R(z)=\left\langle \delta(z - Wx) \right\rangle_{P_\text{data}(x)}$, which is matched to $P_\theta(z)$ via \cite{cardoso2002infomax}
\begin{align} \begin{split}
    \mathcal{F}_\text{ICA}&=-D_{KL}[R(z) \parallel P_\theta(z)] =
    \left\langle \log P_\theta(z) \right\rangle_{R(z)} + 
    H_R[z] \\ 
    &= \left\langle \log P_\theta(z)\right\rangle_{R(z)} + \frac{1}{2} \log |WW^T| + H_{P_\text{data}}[x]\;, 
\end{split} \end{align}
where we used the formula for the change of variables in the entropy. 
Since the data entropy term is constant it can be omitted, and maximizing the likelihood and log-determinant terms leads to disentangled factors (Fig.~\ref{fig:ica}).

The assumption of known source distributions might seem limiting.
However, on the one hand, the assumed latent distribution need not match precisely for successful recovery; it is typically sufficient if the general shape matches \citep[e.g., sub- or super-gaussian,][]{hyvarinen1999independent}.
On the other hand, since we have a well-defined log-likelihood function, we can optimize the distribution parameters $\theta$ to better fit the data \cite{hyvarinen1999independent}.
This last insight will be especially relevant to understand predictive SSL (Section~\ref{sec:predictivemodels}).

\section{SSL from latent distribution matching}

The central limitation of single-variable \ac{ICA}, as formulated above, is that, for a nonlinear unmixing function $z=f(x)$, the problem becomes ill-defined \cite{hyvarinen2023nonlinear}.
However, it is possible to formulate a well-defined source recovery problem by specifying additional conditional dependencies in the latent variable distribution \cite{sprekeler2014extension,hyvarinen2019nonlinear,zimmermann_contrastive_2022,khemakhem2020variational,hyvarinen2023nonlinear}.
The basic intuition is that these additional conditional constraints in latent space provide enough information to determine a unique solution to the nonlinear unmixing problem, i.e., up to trivial transformations.   

Inspired by this we define a \textit{joint distribution} matching goal (cf.\ Fig.~\ref{fig:approach}; general formulation in Appendix \ref{app:overview}).
We first define the reshaped latent distribution via a recognition density $R(z|x)$ and the data distribution 
\begin{equation} \label{eq:Rzz}
    R(z,z') := \left\langle R(z|x)R(z'|x') \right\rangle_{P_\text{data}(x,x')} \; .
\end{equation}
In the following, we assume a deterministic recognition function $R(z|x)=\delta(z-f(x))$, where $f(x)$ is parametrized by a deep neural network. 
We explore nondeterministic encoders in Appendix~\ref{ap:mnist}.
With these definitions, we can formulate the \ac{DM} SSL goal function as follows
\begin{align} 
\begin{split}
    \mathcal{F}_\text{LDM} &=-D_{KL}[R(z,z') \parallel P_\theta(z,z')] \\
    &=
    \underbrace{\langle \log P_\theta(z,z')\rangle_{R(z,z')}}_{\text{Alignment}} + 
    \underbrace{H_R[z,z'] \vphantom{\langle \log P_\theta(z,z')\rangle_{R(z,z')}}}_{\text{Uniformity}}\;.
    \label{eq:sslgoal}
\end{split}
\end{align}
Two observations are worth highlighting.
First, the likelihood term promotes alignment, whereas the entropy term rewards uniformity (cf.\ Fig.~\ref{fig:approach}), as discussed previously by \citet[][]{wang2020understanding}.
Importantly, the uniformity term encourages invertibility of the encoder on the data manifold, by preventing different data points from collapsing to the same point in latent space (Appendix~\ref{app:entropy_invertibility}).
Second, a well-known property of the KL divergence is that it is non-negative, and zero only if the distributions match. Hence, if $\mathcal{F}_\text{LDM}$ is maximized and $R$ and $P_\theta$ have the same support, $R(z,z')=P_\theta(z,z')$, and in particular $R(z|z')=P_\theta(z|z')$.
As outlined above, this conditional \ac{DM} has been shown to recover the underlying latent variables up to trivial transformations when the data-generating process is invertible \cite{hyvarinen2019nonlinear,zimmermann_contrastive_2022,roeder2021linear}. 
We will use this property for proving the theorems in Section~\ref{sec:proofs}.

\subsection{Distribution matching and MI maximization} \label{sec:MImax}

\Ac{DM} and MI maximization are closely related, which can be shown through a goal function previously proposed by \citet{aitchison_infonce_2023}.
Specifically, based on variational inference principles, they arrived at
\begin{align}
\begin{split}
    \label{eq:I_bound}
    \mathcal{F}_\text{MI} &= -D_{KL}[R(z,z') \parallel P_\theta(z,z')] + I_R[z,z'] \\
    &= \left\langle \log P_\theta(z,z') \right\rangle_{R(z,z')} + 2H_R[z] \;,
\end{split} \end{align}
which differs from $\mathcal{F}_\text{LDM}$ only through the \ac{MI} term.
For the second line we assumed that $R(z)$ equals $R(z')$ and thus $H_R[z]=H_R[z']$ to simplify notation.
This expression constitutes a well-known variational bound on the \ac{MI} of $z$ and $z'$ \citep[][Appendix~\ref{app:mibound}]{barber2004algorithm,poole2019variational}.
To see this property, note that Equation~\eqref{eq:I_bound} equals $I_R[z,z']$ if $P_\theta(z,z')$ is learned to match $R(z,z')$.\footnote{This relation of \ac{SSL} to \ac{MI} maximization is only partial, since typically $P_\theta(z,z')$ is restricted, or not learned at all. For an extended discussion see \citet{tschannen2019mutual}.}
The SSL formulation by \textit{Aitchison et al.} is therefore more closely related to many previous SSL approaches, which are often motivated through \ac{MI} maximization \cite{shwartz2023information,oord_representation_2019}.

The key link between $\mathcal{F}_\text{LDM}$ and $\mathcal{F}_\text{MI}$ is that, for an encoder that is invertible on the data manifold, MI is already maximized (Appendix~\ref{app:MIandentropy}).
Because entropy maximization encourages invertibility, we expect the additional MI maximization in Equation~\eqref{eq:I_bound} to contribute little to the resulting latent representation.
In the following, we show how popular SSL algorithms can be derived from $\mathcal{F}_\text{MI}$ and then demonstrate in simulations that they yield virtually identical representations to those of matching SSL algorithms derived from $\mathcal{F}_\text{LDM}$.
To this end, we first discuss briefly how different entropy estimators relate to different SSL approaches.

\subsection{Entropy estimation}

While for categorical latent distributions it is sometimes possible to compute the entropy directly, for continuous variables it has to be approximated.\footnote{Note that there is no difference in principle between estimating the entropy of a single variable or joint distribution ($H_R[z]$ and $H_R[z,z']$, respectively), since the latter can be transformed into the former simply by concatenating the two variables.
We therefore do not treat them separately here.}
This problem is challenging and has no unique best solution \cite{paninski2003estimation}.
However, empirically, we find that relatively basic estimators yield good results as long as they are robust and efficiently optimized.

Depending on the approximation, it is possible to recover contrastive, non-contrastive, or other SSL approaches (for details, see Appendix~\ref{app:estimators}). 
Entropy estimation based on kernel density estimation (KDE) leads to contrastive SSL such as SimCLR.
In contrast, parametric estimation can be related to non-contrastive SSL such as VICReg.
We further show that conditional entropy estimation with a predictor is implicitly implemented by SSL models with a predictor and \ac{stopgrad} (Section~\ref{sec:predictivemodels}).
In addition, we propose that entropy estimation for SSL can be implemented through a K-nearest neighbors (kNN) estimator, which is closely related to the contrastive approach.

\begin{table*}[tbh]
    \centering
    \scriptsize
    \caption{Summary of model mapping to the \ac{DM} framework. \textit{MI max.} denotes whether $\mathcal{F}_\text{LDM}$ ($\circ$) or $\mathcal{F}_\text{MI}$ ($\bullet$) is employed.}
    \label{tab:mapping}
    \begin{tabular}{lllc}
    \hline
         \textbf{SSL Method} &  \textbf{Latent distribution} $P_\theta$ (prior + cond.) & \textbf{Entropy estimator} & \textbf{MI max.} \\
         VICReg \cite{bardes_vicreg_2022} & flat + cond. Gaussian & parametric (Gaussian entropy) & $\bullet$ \\
         SimCLR \cite{chen2020simple} & uniform + cond. von Mises-Fisher & kernel density  & $\bullet$ \\
         CPC \cite{oord_representation_2019} & empirical + predictive cond. von Mises-Fisher & see \citet{aitchison_infonce_2023} & $\bullet$ \\
         BYOL/SimSiam \cite{grill2020bootstrap,chen2021exploring} & empirical + cond. von Mises-Fisher & conditional entropy plugin & $\circ$ \\
         JEPA \citep[e.g.,][]{bardes2024revisiting,mohammadi2025understanding} & predictive cond. Gaussian & conditional entropy plugin & $\circ$ \\
    \hline
    \end{tabular}
\end{table*}

\section{Contrastive and non-contrastive learning} \label{sec:images}

\textbf{VICReg}
is a popular non-contrastive SSL approach that leverages variance and covariance regularization of latent representations \cite{bardes_vicreg_2022,halvagal2023combination}.
To derive a similar algorithm we make the choices
\begin{align} \begin{split}
    P_\theta(z) &= Flat, \\
    P_\theta(z'|z) &= \mathcal{N}(z';\mu=z,\Sigma=\sigma^2 I)\;.
\end{split} \end{align}
This model assumes that two related observations $z$ and $z'$ are much closer than expected under the improper flat prior. 
The goal function of VICReg can be derived from $\mathcal{F}_\text{MI}$ by employing the parametric entropy estimator under a normal distribution (Appendix~\ref{app:vicreg})
\begin{align}
    \begin{split}
    \mathcal{F}_\text{MI} &= -\frac{1}{2\sigma^2} \left\langle \norm{ f(x) - f(x')} ^2 \right\rangle_{P_\text{data}(x,x')} + \log |\Sigma_z| \;,
    \end{split}
\end{align}
where $\Sigma_{z}$ is the covariance matrix of $z$, and $\sigma^2$ trades off between the push and pull terms.
As discussed by \citet{shwartz2023information}, the variance-covariance regularization of VICReg can be understood to approximately maximize the single-variable covariance log-determinant $\log |\Sigma_{z}|$.
This relation can be seen by performing a Taylor expansion of the log-determinant 
$\log |\Sigma_z|\approx \sum_i \log \Sigma_{ii} - \frac{1}{2}\sum_{j\neq i}\frac{\boldsymbol{\Sigma}_{ij}\boldsymbol{\Sigma}_{ji}}{\boldsymbol{\Sigma}_{ii}\boldsymbol{\Sigma}_{jj}}$, which is similar to the regularizer in VICReg.
Thus, VICReg can be understood as \ac{DM} to a conditional normal distribution, with additional \ac{MI} maximization between latent representations.

If, instead, we employ the proposed goal function $\mathcal{F}_\text{LDM}$, the following goal function can be derived:
\begin{align}
    \begin{split}
    \mathcal{F}_\text{LDM} = &-\frac{1}{2\sigma^2} \left\langle \norm{ f(x) - f(x') } ^2 \right\rangle_{P_\text{data}(x,x')} \\ &+ \frac{1}{2}\log |\Sigma_{(z,z')}| \;,
    \end{split}
\end{align}
where $\Sigma_{(z,z')}$ is the covariance matrix of the concatenated vector $(z,z')$. 

\textbf{SimCLR} is a simple contrastive learning algorithm, in which latents $z$ are lying on the unit sphere \cite{chen2020simple}.
To derive it, we assume 
\begin{align} \begin{split}
    P_\theta(z) &= \frac{1}{Z}, \\
    P_\theta(z'|z) &= \frac{1}{Z} \exp (\beta z^Tz')\;.
\end{split} \end{align}
Here, $Z$ is the normalization from integrating over the sphere, and related latents are assumed to be locally close based on the von Mises-Fisher (vMF) distribution, i.e., the spherical normal distribution.
If the entropy is approximated via KDE with bandwidth $1/\beta$ then based on $\mathcal{F}_\text{MI}$ we recover the original goal function (Appendix~\ref{app:simclr}) 
\begin{align} \begin{split}
    \mathcal{F}_\text{MI} = &\left\langle\beta f(x)^Tf(x') \right\rangle_{P_\text{data}(x,x')} - \\ &2\left\langle \log \langle\exp\{\beta f(x)^Tf(x^-)\}\rangle_{P_\text{data}(x^-)} \right\rangle_{P_\text{data}(x)}\;.
\end{split} \end{align}
Thus, SimCLR can be understood as \ac{DM} to a conditional vMF distribution with added \ac{MI} maximization.

\subsection{Empirical comparison} 

To compare both SSL formulations via $\mathcal{F}_\text{LDM}$ and $\mathcal{F}_\text{MI}$, we tested how \ac{MI} maximization and the choice of entropy estimator affect representation learning on natural image datasets.
We implemented SSL algorithms based on all combinations of assumed latent space (sphere or plane), entropy estimators (KDE, kNN, parametric with LogDet), and whether \ac{MI} is maximized ($\mathcal{F}_\text{MI}$) or not ($\mathcal{F}_\text{LDM}$).
Note that although LogDet technically is not a valid entropy estimator for spherical distributions, we included it here for completeness.
We then assessed representational quality using linear probing (Table~\ref{tab:cifar}).
We found no systematic performance difference between models with and without \ac{MI} maximization, as also reflected in network gradients (Appendix Fig.~\ref{fig:A3}).
Instead, the combination of latent space and entropy estimator had a clear and systematic impact on validation performance.
Also, when analyzing the dimensionality of latent representations, we found no apparent difference between $\mathcal{F}_\text{MI}$ and $\mathcal{F}_\text{LDM}$ (Fig.~\ref{fig:cifar}). 
Instead, the assumed latent space, and especially the entropy estimator, had a more pronounced effect.

\begin{table}[ht]
    \centering
    \caption{Validation accuracy of distinct SSL variants for linear probing on different datasets. 
    We used a standard SSL image pretraining setup as described by \citet{solo}.
    \textit{MI} denotes whether $\mathcal{F}_\text{LDM}$~($\circ$) or $\mathcal{F}_\text{MI}$~($\bullet$) was used. 
    Plane-LogDet-$\bullet$ corresponds to VICReg, Sphere-Contr.-$\bullet$ corresponds to SimCLR. SimCLR can also be derived via the reverse KL divergence \cite{zimmermann_contrastive_2022}, potentially explaining the difference in performance.
    See Table \ref{tab:A1} for extended results, including SVHN. 
    }
    \scriptsize
    
    \begin{tabular}{ccc|ccc}
    \hline
       \multicolumn{3}{c|}{Model}  &  CIFAR-10 &  CIFAR-100&  Imagenet-100\\
        Space & Entropy est. & MI & Top-1 Acc.&  Top-1 Acc.& Top-1 Acc.\\
         \hline
\rowcolor{gray!10} 
Plane & Contr. & $\circ$   & 91.9\tiny{$\pm0.1$} & 65.3\tiny{$\pm0.3$} & 72.6 \\
\rowcolor{gray!10}
Plane & Contr. & $\bullet$ & 92.1\tiny{$\pm0.1$} & 65.3\tiny{$\pm0.2$} & 72.7 \\ 
Plane & kNN & $\circ$      & 92.1\tiny{$\pm0.2$} & 65.6\tiny{$\pm0.4$} & 74.3 \\
Plane & kNN & $\bullet$    & 91.9\tiny{$\pm0.1$} & 65.8\tiny{$\pm0.4$} & 73.7 \\
\rowcolor{gray!10}
Plane & LogDet & $\circ$   & 92.1\tiny{$\pm0.1$} & \textbf{69.5}\tiny{$\pm0.2$} & \textbf{75.9} \\
\rowcolor{gray!10}
Plane & LogDet & $\bullet$ & 91.9\tiny{$\pm0.1$} & 68.6\tiny{$\pm0.1$} & 74.7 \\
Sphere & Contr. & $\circ$  & 90.9\tiny{$\pm0.1$} & 64.8\tiny{$\pm0.2$} & 72.1 \\
Sphere & Contr. & $\bullet$ & 91.4\tiny{$\pm0.2$} & 66.0\tiny{$\pm0.2$} & 73.1 \\
\rowcolor{gray!10}
Sphere & kNN & $\circ$     & 90.2\tiny{$\pm0.0$} & 64.3\tiny{$\pm0.1$} & 72.6 \\
\rowcolor{gray!10}
Sphere & kNN & $\bullet$   & 90.0\tiny{$\pm0.2$} & 64.5\tiny{$\pm0.2$} & 73.3 \\
Sphere & LogDet & $\circ$  & 91.4\tiny{$\pm0.2$} & 65.4\tiny{$\pm0.2$} & 73.0 \\
Sphere & LogDet & $\bullet$ & 91.2\tiny{$\pm0.1$} & 65.4\tiny{$\pm0.2$} & 72.3 \\
        \hline
    \end{tabular} 
    \label{tab:cifar}
\end{table}

\begin{figure*}[!ht]
    \centering
    \includegraphics{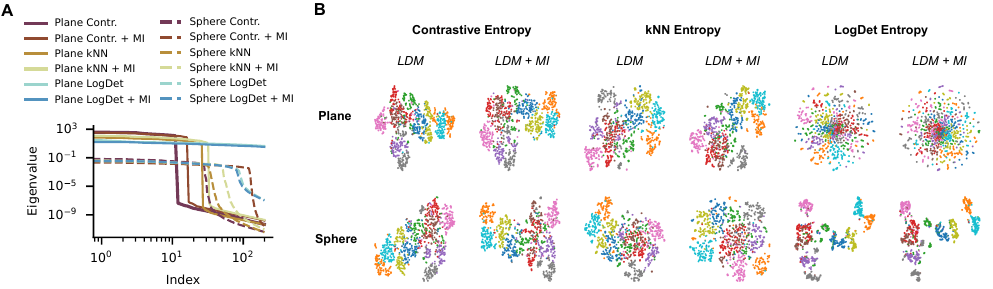}
    \caption{Comparison of learned image representations on CIFAR-10. 
    \textbf{A} The eigenspectrum of the learned representations generally decays more slowly for parametric entropy estimators, both on the plane (solid) and the sphere (dashed). 
    Whether or not \ac{MI} was maximized (+~MI) had little impact on the spectrum.
    The observed cutoff at low double digits is consistent with previous estimates of intrinsic dimensionality of CIFAR-10 \cite{pope2021intrinsic}.
    \textbf{B}~T-SNE embeddings \cite{maaten2008visualizing} of representations paint a similar picture in which MI maximization has little impact. 
    Color denotes label. 
    }
    \label{fig:cifar}
\end{figure*}

\section{Predictive learning of dynamical systems} \label{sec:predictivemodels}

An appealing feature of the \ac{DM} framework is that it also allows us to treat temporal models. 
To this end, we first generalize the goal function to the temporal case
\begin{align} \begin{split}
    \mathcal{F}_\text{LDM} &= -D_{KL}[R(\boldsymbol{z}) \parallel  P_\theta(\boldsymbol{z})] \\
    &= \sum_t \left\langle \log P_\theta(z_t|z_{:t}) \right\rangle_{R(z_t,z_{:t})} + H_R[z_t|z_{:t}]\;,
\end{split} \end{align}
where $z_{:t}=\{z_s:0\leq s < t\}$. For additional \ac{MI} maximization we find 
\begin{align} \begin{split}
    \mathcal{F}_\text{MI} &= -D_{KL}[R(\boldsymbol{z}) \parallel  P_\theta(\boldsymbol{z})] + \sum_t I_R[z_t,z_{:t}] \\
    &= \sum_t \left\langle \log P_\theta(z_t|z_{:t}) \right\rangle_{R(z_t,z_{:t})} + H_R[z_t]\;.
\end{split} \end{align}
Depending on the choice of predictor $P_\theta(z_t|z_{:t})$ and entropy estimator, we can relate $\mathcal{F}_\text{LDM}$ and $\mathcal{F}_\text{MI}$ to previously proposed models.
The key difference from the image models in Section~\ref{sec:images} is that these temporal models also optimize the latent models' parameters $\theta$.

\textbf{Predictive SSL with \ac{stopgrad}.}
The \ac{DM} goal $\mathcal{F}_\text{LDM}$ requires to either maximize the total entropy $H_R[\boldsymbol{z}]$ or the conditional entropies $H_R[z_t|z_{:t}]$.
For long time series, the latter is much easier to achieve, since $\boldsymbol{z}$ becomes high-dimensional.
In fact, predictive SSL approaches with \ac{stopgrad} \cite{grill2020bootstrap,bardes2024revisiting,mohammadi2025understanding} implicitly implement approximate conditional entropy maximization, which we show in Appendix~\ref{app:predstopgrad}.
These approaches approximate the goal function as
\begin{align}
    \begin{split}
        \mathcal{F}_\text{LDM} \approx \sum_t \left\langle \log P_\theta(SG[z_t]|z_{:t})\right\rangle_{R(z_t,z_{:t})} \;.
    \end{split}
\end{align}
In particular, if the predictor $P_\theta(z_t|z_{:t})$ is a conditional Gaussian distribution with mean $p(z_{:t})$, we recover the loss functions of previous predictive SSL approaches as proportional to $ \norm{ SG[z_t]-p(z_{:t}) } ^2$.

\textbf{\Acf{CPC}.}
For $\mathcal{F}_\text{MI}$, single-timestep entropy can easily be maximized through either KDE, kNN, or parametric estimators.
Note that InfoNCE has been derived before from $\mathcal{F}_\text{MI}$ under a slightly different model \cite{aitchison_infonce_2023}.

\begin{figure*}[!ht]
    \centering
    \includegraphics{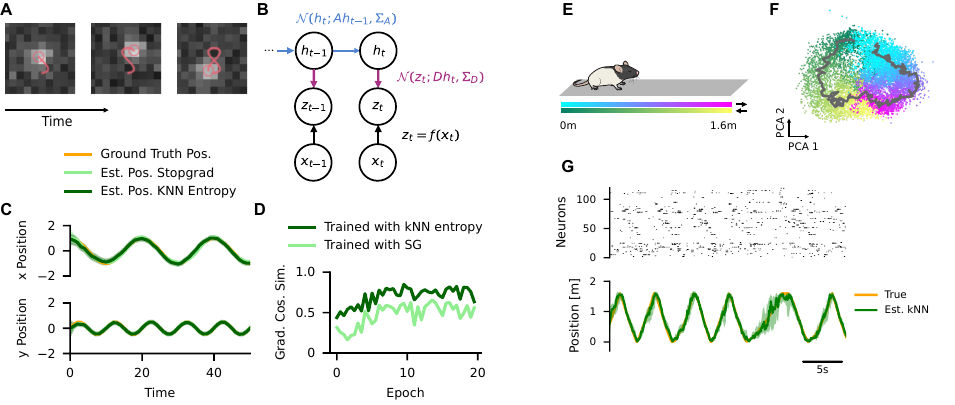}
    \caption{Predictive distribution matching in latent space using a nonlinear Bayesian filtering model with Kalman-based predictor.
    \textbf{A}~Example frames of synthetic dataset of a high dimensional noisy observable with linear latent dynamics. The red line denotes ground truth position. See Appendix, Fig.~\ref{fig:A4a} for a more nonlinear task.
    \textbf{B}~We use a Kalman filter backbone for the predictor $P_\theta(z_t|z_{:t})$ with hidden states $h_t$ and latent ``observations" $z_t$. 
    \textbf{C}~Estimated position and ground truth over time. After learning, we can linearly decode the true position and uncertainty from the Kalman hidden states $h_t$. Training with \ac{MI} maximization (kNN entropy estimator) or without (\ac{stopgrad}) results in approximately equal performance. 
    \textbf{D}~The cosine similarity of gradients of network weights w.r.t. entropy estimators also shows that both approaches lead to similar optimization. Similarity increases with training when the predictor becomes more accurate and so does the \ac{stopgrad} entropy estimator.
    \textbf{E}~Experimental setup in \citet{grosmark2016diversity}.
    \textbf{F}~The learned latent states $h_t$ are arranged in a circle according to the position and direction of the rat. One example trajectory over time is displayed in gray.
    \textbf{G}~Example neural spiking timeseries and the estimated position based on the model. Shaded areas denote 95\% confidence intervals based on model covariances $\Sigma_{h_t}$ (see Appendix~\ref{app:simulation}).
    }
    \label{fig:kalman}
\end{figure*}

\subsection{Nonlinear Bayesian filtering with Kalman predictor}

To demonstrate how \ac{DM} enables deriving new SSL algorithms, we now consider a model in which the predictor is given by a Kalman filter.
This enables to analytically quantify uncertainty in latent representations in a nonlinear dynamical system with linear latent dynamics.
We summarize the assumptions in Fig.~\ref{fig:kalman}B, which lead us to the following model log-likelihood: 
\begin{align}
    \begin{split}
       \log P_\theta(z_t | z_{:t}) &\propto -\frac{1}{2}e_t^\top \Sigma_e^{-1}e_t - \frac{1}{2}\log| \Sigma_e|\;, \\
       e_t &= z_t - D A h_{t-1}\;,
    \end{split}
\end{align}
where $h_t$ is the estimated hidden latent state of the filter and $\Sigma_e$ the estimated prediction error covariance.
$h_t$ and $\Sigma_e$ can be computed analytically from past latent states $z_t$ through the well-known Kalman filter equations. 

To understand the differences between representation learning with \ac{CPC} and predictive \ac{stopgrad}, we devised a nonlinear filtering problem based on synthetic videos.
Since we employed a Kalman-based latent state predictor, we generated linear hidden dynamics that were then nonlinearly transformed into image space (Fig.~\ref{fig:kalman}A, Appendix Fig.~\ref{fig:A4a}).
After learning, both \ac{CPC} (with MI maximization) and predictive \ac{stopgrad} SSL (without MI maximization) recovered the underlying latent variables equally well (Fig.~\ref{fig:kalman}C) with an $R^2=0.99$ for both, as quantified by linear probing. 
Since we did not observe a discernible difference between maximizing \ac{MI} or not, we wondered whether the network gradients of the single variable entropy $H_R[z_t]$ and conditional entropy $H_R[z_t|z_{:t}]$, respectively, were different.
We therefore computed the cosine similarity between the network gradients with respect to the kNN single-variable entropy estimator (Appendix~\ref{app:knn}), and the \ac{stopgrad} conditional entropy estimator (Appendix~\ref{app:condent}), and found that they consistently showed good alignment (Fig.~\ref{fig:kalman}D).

To showcase the benefits of the Kalman-based predictor's uncertainty-aware state, which is purely based on latent-state prediction,
we extended the framework to allow for input-dependent predictor parameters $\theta(x_t)$ (Appendix, Fig.~\ref{fig:A4b}).
Doing so allows noise-aware filtering that dynamically tunes the predictor to the current input noise level.
We demonstrate this benefit using an in-vivo dataset of rat hippocampal spike trains, recorded while rats ran on a linear track (Fig.~\ref{fig:kalman}E).
Since rat traversals are periodic, Kalman-based SSL learns to arrange encoded spike trains on a circle, where trajectories approximately follow a linear system (Fig~\ref{fig:kalman}F).
The estimated uncertainty in the Kalman filter can be directly translated into an uncertainty estimate on predictions of rat position (Fig.~\ref{fig:kalman}G, Appendix, Fig.~\ref{fig:A4b}C), which previous predictive SSL approaches do not allow.

\section{Identifiability in predictive models} \label{sec:proofs}


A latent representation is identifiable if it recovers the underlying latent variables up to a restricted class of transformations, rather than remaining arbitrary up to nonlinear reparameterizations.
Several previous studies proved that contrastive learning can identify the true underlying variables, e.g., for pairs of inputs \cite{zimmermann_contrastive_2022} and dynamical systems \cite{laiz2024self}.
In the following, we generalize these results in two respects: 
First, identifiability holds for any SSL variant that maximizes the \ac{DM} objective under an appropriate model, not only contrastive methods.
Second, indentification requires \emph{only} conditional distribution matching, while the marginal distribution of the latent variables may remain unconstrained. 

We consider predictive models that factorize the latent distribution as 
$ P_\theta(\boldsymbol{z}) = \prod_t P_\theta(z_t|z_{:t})$,
and assume that learning proceeds by matching this distribution and the empirical latent distribution $ R(\boldsymbol{z}) $ induced by the encoder $f$.

\begin{theorem}[Identifiability of predictive distribution matching under a Gaussian predictor]
\label{th:ident}
Assume that:
\begin{enumerate}[label=(\roman*), leftmargin=*, noitemsep, nolistsep ]
    \item the model and true predictive distributions are Gaussian of the form: 
    \[
    P(z_t | z_{:t})
    = \mathcal{N}\!\big(z_t \,;\, p(z_{:t}), \Sigma\big) \quad ,
    \]
    with (potentially nonlinear) prediction function $p(\cdot)$ and non-degenerate covariance \(\Sigma\);
    \item the encoder is invertible on the data manifold;
    \item the predictor covers the latent space.
\end{enumerate}
Then, at the optimum of the \ac{DM} objective, the learned representation recovers the true latent variables up to an affine transformation.
\end{theorem}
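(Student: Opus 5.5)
We write the true latents as $s_t$, with $x_t = g(s_t)$ and $g$ injective. By (ii), the learned representation $z_t = f(x_t)$ is related to the truth through a map $h := f\circ g$, so that $z_t = h(s_t)$. This $h$ is a bijection from the true latent space onto its image. The goal is to show that $h$ is affine.

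At the optimum of the \ac{DM} objective the KL divergence vanishes, so $R(\boldsymbol{z}) = P_\theta(\boldsymbol{z})$. Because both sides factorize autoregressively, the conditionals must agree almost everywhere:
\[
R(z_t|z_{:t}) = \mathcal{N}\big(z_t; \hat p(z_{:t}), \hat\Sigma\big).
\]
On the other hand, $R(z_t|z_{:t})$ is the pushforward under $h$ of the true conditional $\mathcal{N}(s_t; p(s_{:t}), \Sigma)$. Writing $s_{:t} = h^{-1}(z_{:t})$ and applying the change of variables, we get for all $s_t$ and all histories
\begin{align*}
-\tfrac12\big(s_t - p(s_{:t})\big)^\top \Sigma^{-1}\big(s_t - p(s_{:t})\big)
={}& -\tfrac12\big(h(s_t) - \hat p(h(s_{:t}))\big)^\top \hat\Sigma^{-1}\big(h(s_t) - \hat p(h(s_{:t}))\big)\\
&+ \log|\det J_h(s_t)| + c .
\end{align*}
Here $c$ collects the normalizing constants. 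This is the same functional equation that appears in the proof of \citet{zimmermann_contrastive_2022}, except that the conditioning point is now $p(s_{:t})$ instead of $s$ itself.

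Next we reparametrize by the prediction. Set $a := p(s_{:t})$ and $b := \hat p(h(s_{:t}))$. By (iii), $a$ ranges over the whole latent space as the history varies. The left-hand side depends on the history only through $a$. Two histories with the same $a$ therefore give identical functions of $s_t$ on the left, and comparing the right-hand sides forces their $b$ values to coincide. Hence $b = \phi(a)$ for some function $\phi$, and we obtain
\[
\tfrac12 \|s-a\|^2_{\Sigma^{-1}} = \tfrac12 \|h(s)-\phi(a)\|^2_{\hat\Sigma^{-1}} - \log|\det J_h(s)| + c \quad \text{for all } s, a.
\]
Expanding both quadratics and subtracting the equation at a reference point $a_0$ removes the terms that depend only on $s$, including the Jacobian term. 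What remains is the identity
\[
-s^\top\Sigma^{-1}(a-a_0) + \alpha(a) = -h(s)^\top\hat\Sigma^{-1}\big(\phi(a)-\phi(a_0)\big) + \beta(a).
\]
Because $a$ covers the space, the vectors $\Sigma^{-1}(a-a_0)$ span $\mathbb{R}^d$. Choosing $d$ values of $a$ that make these vectors a basis expresses each coordinate of $s$ as an affine function of $h(s)$. So $h^{-1}$ is affine, and since it is a bijection, $h$ is affine as well.

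I expect the main obstacle to be rigor rather than algebra. Three points need care. First, the equalities hold only almost everywhere; continuity of $h$ and $p$ lets us upgrade them to everywhere. Second, the matrix built from $\phi(a)-\phi(a_0)$ must be invertible. This should follow from $h$ being injective: if the span were deficient, $s$ would be determined by a lower-dimensional function of $h(s)$. Third, the "covers the latent space" assumption must be stated precisely enough that the histories realize every $a$ in an open set, so that the spanning argument goes through. If the true and model dimensions differ, the same argument shows that $h$ is affine onto its image, provided the model dimension is at least $d$.
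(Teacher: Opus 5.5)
Your proof is correct, but it takes a different route from the paper's. Both arguments start from the same per-timestep change-of-variables identity.

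The paper removes the terms that depend on $z_t$ alone, including $\log|J_h|$, by applying the mixed derivative $\nabla_{z_t}\nabla_{z_{:t}}$. This gives $(J_{p_R}(h(z_{:t}))J_h(z_{:t}))^\top\Sigma_R^{-1}J_h(z_t)=J_p(z_{:t})^\top\Sigma^{-1}$. The paper reads (iii) as full row rank of the predictor Jacobian, cancels the left factor, and concludes that $J_h$ is constant.

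You instead take finite differences across histories, which is the auxiliary-variable mechanism of Zimmermann et al.\ and Hyv\"arinen et al. You read (iii) as a condition on the range of the predicted means.

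Your version needs no differentiability of $p$ or $\hat p$. It only asks that the predicted means not all lie in one affine hyperplane. The paper's hypothesis implies this, since full row rank of $J_p$ at a single history makes the range of $p$ contain an open set. Your condition also covers non-smooth predictors, or predictors with finitely many outputs. The paper's argument is shorter and purely local, but it requires second-order smoothness and the Jacobian rank condition.

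There are two simplifications and one correction.
The detour through $\phi$ is unnecessary. Subtracting the identity at two histories gives $s^\top\Sigma^{-1}(a_1-a_0)=h(s)^\top\hat\Sigma^{-1}(b_1-b_0)+\mathrm{const}$ directly, whether or not $b$ is a function of $a$.
Your second point of care settles itself. You obtain $s=Bh(s)+e$ for all $s$ in an open set, which forces $B$ to have rank $d$.
Your closing remark on unequal dimensions is wrong as stated. For model dimension $n>d$, the identity $s=Bh(s)+e$ only says that $h$ has an affine left inverse; for example, $h(s)=(s,\psi(s))$ with nonlinear $\psi$ satisfies it. That case is excluded anyway, because a pushforward concentrated on a $d$-dimensional image cannot have a non-degenerate Gaussian conditional density on $\mathbb{R}^n$.
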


The theorem shows that predictive SSL models trained via \ac{DM} identify the actual latent state spaces, without explicit observation-level likelihoods or contrastive objectives.
Identifiability arises from the local noise structure of the prediction errors: the assumed Gaussian form of the predictive residuals constrains admissible transformations of the latent space, ruling out arbitrary nonlinear reparameterizations (Fig.~\ref{fig:identifiability}A).
Importantly, the result does not require strong assumptions on the expressiveness or architecture of the predictor itself beyond coverage of the latent space. 

The proof is given in Appendix~\ref{app:ident} and is closely related to previous identifiability proofs  \cite{hyvarinen2019nonlinear, zimmermann_contrastive_2022,laiz2024self}.

We can derive a similar identifiability result for von Mises-Fisher predictive models
\begin{theorem}
\label{th:identCPC}
Under assumptions (ii) and (iii) and assuming a von Mises-Fisher predictive model the \ac{DM} objective recovers the true latent space up to an affine transformation.
\end{theorem}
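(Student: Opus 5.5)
We follow the same route as for Theorem~\ref{th:ident}, replacing the Gaussian log-density by the von Mises--Fisher one. The latents $z_t$ are taken to live on the unit hypersphere $\mathbb{S}^{d-1}$. Both the true and the model predictive distributions are assumed to have the form
\[
P(z_t|z_{:t}) = C(\kappa)^{-1}\exp\!\big(\kappa\, p(z_{:t})^\top z_t\big),
\]
where $p(\cdot)$ is a unit-norm, possibly nonlinear, prediction function and $\kappa>0$ is a concentration parameter.

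\textbf{Step 1: reduce to a functional equation.} Write the true latents as $s_t$ and the data as $x_t=g(s_t)$. By assumption (ii) the composite map $h=f\circ g$ is a bijection from the true latent space onto its image. At the optimum of $\mathcal{F}_\text{LDM}$ the KL divergence vanishes, and by the support argument of Section~4 the conditionals match. Since $h$ is injective and the conditioning sets determine each other, $R(z_t|z_{:t}) = P_\theta(z_t|z_{:t})$ for all histories. Substituting $z_t=h(s_t)$ and using that $h$ is a diffeomorphism of the sphere (needed for the change-of-variables Jacobian) gives, for all $s_t$ and histories $s_{:t}$,
\[
\kappa\, p(s_{:t})^\top s_t - \log C(\kappa) = \hat\kappa\, \hat p(h(s_{:t}))^\top h(s_t) - \log C(\hat\kappa) + \log\big|J_h(s_t)\big| .
\]

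\textbf{Step 2: eliminate the Jacobian.} Fix two histories and subtract the corresponding equations. This cancels $\log|J_h(s_t)|$ and the normalizers, leaving
\[
\kappa\,(p_1-p_2)^\top s_t = \hat\kappa\,(\hat p_1-\hat p_2)^\top h(s_t) + c_{12}
\]
for all $s_t$. Assumption (iii) says the predictor covers the latent space, so the vectors $p(s_{:t})$ range over an open subset of the sphere. We can then choose $d$ differences $p_i-p_0$ that span $\mathbb{R}^d$. Stacking these equations gives $A s_t = B h(s_t) + c$ with $A$ invertible. Hence $B$ has full rank (otherwise the image of $h$ would lie in a lower-dimensional affine set), and therefore
\[
h(s) = B^{-1}(A s - c).
\]
So $h$ is affine, as claimed. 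One can go further: since $h$ maps the sphere to the sphere, it must in fact be orthogonal. That sharpening is not needed for the stated result.

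\textbf{Main obstacle.} The delicate step is the spanning argument under assumption (iii). On the sphere only differences of prediction vectors enter, and these must span all $d$ directions rather than only the tangent ones. Choosing predictions from an open cap handles this, because differences from a base point span $\mathbb{R}^d$. I would also verify the measure-theoretic details: the equalities hold almost everywhere, and continuity of $h$ then upgrades them to hold everywhere. Finally, the argument needs $\hat\kappa>0$; this holds because a degenerate model with $\hat\kappa=0$ is uniform and cannot match the non-uniform conditionals.
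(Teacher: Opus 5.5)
Your proof is correct, but it takes a different route from the paper. The paper reuses the Gaussian template. It applies the mixed derivative $\nabla_{z_t}\nabla_{z_{:t}}$ to the matched log-densities and obtains $\beta_R (J_{p_R}(h(z_{:t}))J_h(z_{:t}))^\top J_h(z_t) = \beta_P J_p(z_{:t})^\top$. It then uses the full-row-rank assumption (iii) on $J_{p_R}$ to conclude that $J_h(z_t)$ is constant in $z_t$. You instead take finite differences between histories, which remove $\log|J_h|$ and the normalizers just as well, and you get the affine form globally by stacking $d$ identities.

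This buys you three things. First, you need no differentiability of $p$ or $\hat p$. Second, you only need the true predictions to have affine hull $\mathbb{R}^d$, and invertibility of $B$ (coverage by the learned predictor) is derived rather than assumed. Third, you handle the normal direction of the sphere explicitly. This last point matters. Since $p_R$ takes values on $\mathbb{S}^{d-1}$, $J_{p_R}$ has ambient rank at most $d-1$. At a fixed history, the paper's identity therefore only constrains the components of $J_h(z_t)$ along the tangent space at $\hat p$, so its one-line ``the same conclusion follows'' tacitly needs information from several histories. Your differences $p_i-p_0$, whose component along $p_0$ is $-\tfrac12\|p_i-p_0\|^2 p_0\neq 0$, supply exactly that. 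In exchange, the paper's argument is local and formally identical to the Euclidean Gaussian case.

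One small correction: $B$ is invertible not because the image of $h$ would otherwise be degenerate. The reason is that $v^\top B=0$ with $v\neq 0$ would make the nonzero linear functional $s\mapsto (A^\top v)^\top s$ constant on the sphere, which is impossible. The same observation rules out $\hat\kappa=0$ without a separate argument.
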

See Appendix~\ref{app:ident} for the proof.

\begin{figure}[t]
    \centering
    \includegraphics{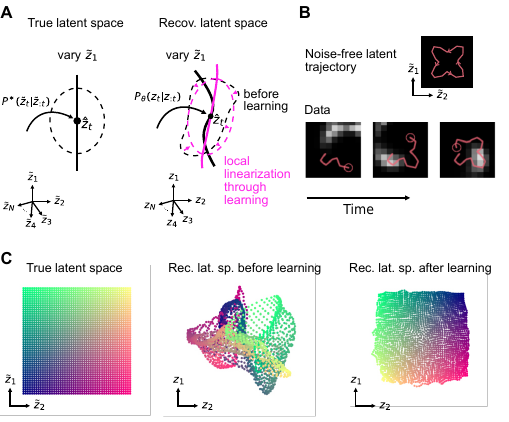}
    \caption{System identification through predictive \ac{DM}.
    \textbf{A} Forcing prediction errors into a Gaussian form leads to local linearization of the relation between true and recovered latent variables.
    \textbf{B}~Schematic of nonlinear prediction task. Trajectory noise in the true latent space is Gaussian to enable identification.
    \textbf{C}~Visualizations of the actual (left) and recovered latent space before (middle) and after (right) learning. Predictive \ac{DM} recovers the true space up to affine transformations.
    }
    \label{fig:identifiability}
\end{figure}

\subsection{Empirical validation}

To demonstrate identifiability in simulations we set up a nonlinear prediction task based on the simple dynamical system (Fig.~\ref{fig:identifiability}B)
\begin{equation}
    \begin{cases}
        \dot{x} = ak \cos(\theta) \cos(k(\theta - \theta_0)) - r \sin(\theta) + \nu_x\\
        \dot{y} = ak \sin(\theta) \cos(k(\theta - \theta_0)) + r \cos(\theta)+ \nu_y
    \end{cases}
\end{equation}
where $r = \sqrt{x^2 + y^2}$, $\theta = \text{arctan}(y, x)$, $\theta_0$ is the angle of the system, and $k=4$. 
Further, $\nu_x$ and $\nu_y$ are Gaussian distributed random variables, which are essential to satisfy Assumption \textit{(i)} of Theorem~\ref{th:ident}---although the model is robust to limited deviations from this condition (Appendix, Fig.~\ref{fig:A4.5}). 
As before, we converted positions to images of a dot, while applying an additional nonlinear swirl transformation. 

To model this data, we trained MLP inference and RNN + MLP prediction networks based on $\mathcal{F}_\text{MI}$ with a Gaussian predictor and kNN entropy estimator. 
This satisfies assumption (i)---note that we do not enforce assumptions (ii) and (iii), which we expected to emerge from entropy maximization.
We chose a two-dimensional latent space to match the true and recovered latent dimensionality. 
The simulation gave two key results.
First, the model linearly encodes the true position ($R^2=0.94$) and thus can recover the true latent space up to affine transformations (Fig.~\ref{fig:identifiability}C).
Second, even though the initial inference function clearly is not injective, training via entropy maximization in \ac{DM} leads to an injective encoder after learning  (Appendix~\ref{app:entropy_invertibility}).

Note that \ac{stopgrad}-based models like V-JEPA make similar model assumptions as in this section, but employ a heuristic entropy estimator that assumes the conditional distribution to be Gaussian.
While this was enough to identify the simpler system in Section~\ref{sec:predictivemodels}, where the predictor was additionally constrained to be linear, in this nonlinear scenario, we only found approximate identification of the true system (Appendix Fig.~\ref{fig:A4.5}).

Finally, we verified that identifiably can also be achieved with higher dimensional latent distributions, although learning becomes more challenging (Appendix, Fig.~\ref{fig:A4.5b}).

\section{Discussion}

We showed that several established approaches to SSL can be understood under the unifying principle of distribution matching in latent space.
We motivated this approach with insights from linear \ac{ICA} and manifold normalizing flows: For encoders that are invertible on the data manifold, maximizing the data log-likelihood is equivalent to \ac{DM}.
However, instead of maximizing the log-determinant of the Jacobian, as in normalizing flows, we maximized the entropy of the latent representations.
We further showed that latent entropy maximization in \ac{DM} encourages invertibility on the data manifold.
This insight explains the close connection between \ac{DM} with or without MI maximization, since in the case of an invertible encoder, the mutual information is maximized $I[f(x),f(x')]=I[x,x']$, and thus $\mathcal{F}_\text{MI} \propto \mathcal{F}_\text{LDM}$.
This explains why MI-based approaches have empirically been found to enable SSL despite the invariance of MI: they systematically misestimate MI, based on constrained (e.g., Gaussian) predictors, resulting in an \ac{DM} goal.

Our analysis highlights an important conceptual point.
The assumption that the encoder is approximately invertible may appear to conflict with a common intuition in SSL:
that its strength lies in discarding ``irrelevant'' information and retaining only high-level semantic or conceptual information.
The \ac{DM} framework suggests a different interpretation: SSL representations may, in fact, discard little details about the data manifold. Indeed, latent spaces in SSL models frequently span hundreds or thousands of dimensions, while the intrinsic dimensionality of image data is typically estimated to be only a few tens \cite{pope2021intrinsic}. 
This disparity indicates that SSL does not primarily operate through lossy compression. 
Rather, its advantage lies in the ability to exert fine-grained control over how the data manifold is geometrically reparameterized by specifying the model likelihood in latent, rather than data, space. 
By embedding appropriate inductive biases into the latent model, the resulting alignment objective can promote manifold geometries in which abstract, generalizable features are untangled and readily accessible to the predictor, while deemphasizing local, pixel-level structure that generative models prioritize to preserve.

One of the main practical challenges in the \ac{DM} approach is to design accurate, sample efficient, and easily optimizable entropy estimators. 
Especially the requirement of optimizability necessitates compromises when choosing between different entropy approximation approaches.
For example, choosing a fixed bandwidth for KDE is not optimal for precise entropy estimation, but significantly reduces the complexity of the estimator and increases its robustness.
Given that entropy estimation significantly affects representational geometry and downstream performance, designing new entropy estimators with improved performance might be one of the most pressing problems for advancing SSL.
This observation aligns with recent work proposing uniformity objectives grounded in statistical test theory \cite{balestriero2025lejepa}.
Viewed through the lens of \ac{DM}, such approaches can be systematically analyzed and extended in the future using tools from statistical modeling.

\section*{Acknowledgments}

We thank Steffen Schneider, Rodrigo González Laiz, Tobias Schmidt, Manu Halvagal, Atena Mohammadi, and all Zenke Lab members for their input and discussions. 
This project was supported by the Swiss National Science Foundation (Grant Number PCEFP3\_202981) and the Novartis Research Foundation.

\section*{Impact Statement}

This paper presents work whose goal is to advance the field of machine learning. There are many potential societal consequences of our work, none of which we feel must be specifically highlighted here.

\bibliography{sample}

@article{ahmad1976nonparametric,
  title={A nonparametric estimation of the entropy for absolutely continuous distributions (corresp.)},
  author={Ahmad, Ibrahim and Lin, Pi-Erh},
  journal={IEEE Transactions on Information Theory},
  volume={22},
  number={3},
  pages={372--375},
  year={1976},
  publisher={IEEE}
}

@article{zhouyin2025understanding,
  title={Understanding neural networks with logarithm determinant entropy estimator},
  author={Zhouyin, Zhanghao and Liu, Ding},
  journal={Neurocomputing},
  pages={131520},
  year={2025},
  publisher={Elsevier}
}

@article{papamakarios2021normalizing,
  title={Normalizing flows for probabilistic modeling and inference},
  author={Papamakarios, George and Nalisnick, Eric and Rezende, Danilo Jimenez and Mohamed, Shakir and Lakshminarayanan, Balaji},
  journal={Journal of Machine Learning Research},
  volume={22},
  number={57},
  pages={1--64},
  year={2021}
}

@article{balestriero2025gaussian,
  title={Gaussian Embeddings: How JEPAs Secretly Learn Your Data Density},
  author={Balestriero, Randall and Ballas, Nicolas and Rabbat, Mike and LeCun, Yann},
  journal={arXiv preprint arXiv:2510.05949},
  year={2025}
}

@article{choi2005blind,
  title={Blind source separation and independent component analysis: A review},
  author={Choi, Seungjin and Cichocki, Andrzej and Park, Hyung-Min and Lee, Soo-Young},
  journal={Neural Information Processing-Letters and Reviews},
  volume={6},
  number={1},
  pages={1--57},
  year={2005}
}

@article{hyvarinen2023nonlinear,
  title={Nonlinear independent component analysis for principled disentanglement in unsupervised deep learning},
  author={Hyv{\"a}rinen, Aapo and Khemakhem, Ilyes and Morioka, Hiroshi},
  journal={Patterns},
  volume={4},
  number={10},
  year={2023},
  publisher={Elsevier}
}

@article{sprekeler2014extension,
  title={An extension of slow feature analysis for nonlinear blind source separation},
  author={Sprekeler, Henning and Zito, Tiziano and Wiskott, Laurenz},
  journal={The Journal of Machine Learning Research},
  volume={15},
  number={1},
  pages={921--947},
  year={2014},
  publisher={JMLR. org}
}

@inproceedings{hyvarinen2019nonlinear,
  title={Nonlinear ICA using auxiliary variables and generalized contrastive learning},
  author={Hyvarinen, Aapo and Sasaki, Hiroaki and Turner, Richard},
  booktitle={The 22nd international conference on artificial intelligence and statistics},
  pages={859--868},
  year={2019},
  organization={PMLR}
}

@inproceedings{roeder2021linear,
  title={On linear identifiability of learned representations},
  author={Roeder, Geoffrey and Metz, Luke and Kingma, Durk},
  booktitle={International Conference on Machine Learning},
  pages={9030--9039},
  year={2021},
  organization={PMLR}
}

@article{barber2004algorithm,
  title={The im algorithm: a variational approach to information maximization},
  author={Barber, David and Agakov, Felix},
  journal={Advances in neural information processing systems},
  volume={16},
  number={320},
  pages={201},
  year={2004}
}

@inproceedings{poole2019variational,
  title={On variational bounds of mutual information},
  author={Poole, Ben and Ozair, Sherjil and Van Den Oord, Aaron and Alemi, Alex and Tucker, George},
  booktitle={International conference on machine learning},
  pages={5171--5180},
  year={2019},
  organization={PMLR}
}

@article{kozachenko1987sample,
  title={Sample estimate of the entropy of a random vector},
  author={Kozachenko, Leonenko},
  journal={Probl. Pered. Inform.},
  volume={23},
  pages={9},
  year={1987}
}

@article{lombardi2016nonparametric,
  title={Nonparametric k-nearest-neighbor entropy estimator},
  author={Lombardi, Damiano and Pant, Sanjay},
  journal={Physical Review E},
  volume={93},
  number={1},
  pages={013310},
  year={2016},
  publisher={APS}
}

@article{cardoso2002infomax,
  title={Infomax and maximum likelihood for blind source separation},
  author={Cardoso, J-F},
  journal={IEEE Signal processing letters},
  volume={4},
  number={4},
  pages={112--114},
  year={2002},
  publisher={IEEE}
}

@article{hyvarinen1999independent,
  title={Independent component analysis: A tutorial},
  author={Hyv{\"a}rinen, Aapo and Oja, Erkki},
  journal={Neural Networks},
  volume={1},
  pages={1--30},
  year={1999}
}

@inproceedings{zimmermann_contrastive_2022,
  title={Contrastive learning inverts the data generating process},
  author={Zimmermann, Roland S and Sharma, Yash and Schneider, Steffen and Bethge, Matthias and Brendel, Wieland},
  booktitle={International conference on machine learning},
  pages={12979--12990},
  year={2021},
  organization={PMLR}
}

@article{shwartz2023information,
  title={An information theory perspective on variance-invariance-covariance regularization},
  author={Shwartz-Ziv, Ravid and Balestriero, Randall and Kawaguchi, Kenji and Rudner, Tim GJ and LeCun, Yann},
  journal={Advances in Neural Information Processing Systems},
  volume={36},
  pages={33965--33998},
  year={2023}
}

@article{balestriero2025lejepa,
  title={LeJEPA: Provable and Scalable Self-Supervised Learning Without the Heuristics},
  author={Balestriero, Randall and LeCun, Yann},
  journal={arXiv preprint arXiv:2511.08544},
  year={2025}
}

@article{reizinger2024cross,
  title={Cross-entropy is all you need to invert the data generating process},
  author={Reizinger, Patrik and Bizeul, Alice and Juhos, Attila and Vogt, Julia E and Balestriero, Randall and Brendel, Wieland and Klindt, David},
  journal={arXiv preprint arXiv:2410.21869},
  year={2024}
}

@inproceedings{wang2020understanding,
  title={Understanding contrastive representation learning through alignment and uniformity on the hypersphere},
  author={Wang, Tongzhou and Isola, Phillip},
  booktitle={International conference on machine learning},
  pages={9929--9939},
  year={2020},
  organization={PMLR}
}

@inproceedings{walker2023unsupervised,
  title={Unsupervised representation learning with recognition-parametrised probabilistic models},
  author={Walker, William I and Soulat, Hugo and Yu, Changmin and Sahani, Maneesh},
  booktitle={International Conference on Artificial Intelligence and Statistics},
  pages={4209--4230},
  year={2023},
  organization={PMLR}
}

@article{bardes_vicreg_2022,
	title={Vicreg: Variance-invariance-covariance regularization for self-supervised learning},
  author={Bardes, Adrien and Ponce, Jean and LeCun, Yann},
  journal={arXiv preprint arXiv:2105.04906},
  year={2021}
}

@article{shwartz_ziv_compress_2024,
  title={To compress or not to compress—self-supervised learning and information theory: A review},
  author={Shwartz Ziv, Ravid and LeCun, Yann},
  journal={Entropy},
  volume={26},
  number={3},
  pages={252},
  year={2024},
  publisher={MDPI}
}

@article{
aitchison_infonce_2023,
title={Info{NCE} is variational inference in a recognition parameterised model},
author={Laurence Aitchison and Stoil Krasimirov Ganev},
journal={Transactions on Machine Learning Research},
issn={2835-8856},
year={2024},
url={https://openreview.net/forum?id=chbRsWwjax},
note={}
}

@article{oord_representation_2019,
  title={Representation learning with contrastive predictive coding},
  author={Oord, Aaron van den and Li, Yazhe and Vinyals, Oriol},
  journal={arXiv preprint arXiv:1807.03748},
  year={2018}
}

@inproceedings{tian2021understanding,
  title={Understanding self-supervised learning dynamics without contrastive pairs},
  author={Tian, Yuandong and Chen, Xinlei and Ganguli, Surya},
  booktitle={International Conference on Machine Learning},
  pages={10268--10278},
  year={2021},
  organization={PMLR}
}

@article{srinath2023implicit,
  title={Implicit variance regularization in non-contrastive SSL},
  author={Halvagal, Manu Srinath and Laborieux, Axel and Zenke, Friedemann},
  journal={Advances in Neural Information Processing Systems},
  volume={36},
  pages={63409--63436},
  year={2023}
}

@article{Ben-Shaul_Shwartz-Ziv_Galanti_Dekel_LeCun_2023,
  title={Reverse engineering self-supervised learning},
  author={Ben-Shaul, Ido and Shwartz-Ziv, Ravid and Galanti, Tomer and Dekel, Shai and LeCun, Yann},
  journal={Advances in Neural Information Processing Systems},
  volume={36},
  pages={58324--58345},
  year={2023}
}

@inproceedings{Nakamura_Okada_Taniguchi_2023,
  title={Representation uncertainty in self-supervised learning as variational inference},
  author={Nakamura, Hiroki and Okada, Masashi and Taniguchi, Tadahiro},
  booktitle={Proceedings of the IEEE/CVF International Conference on Computer Vision},
  pages={16484--16493},
  year={2023}
}

@article{assran2025v,
  title={V-jepa 2: Self-supervised video models enable understanding, prediction and planning},
  author={Assran, Mido and Bardes, Adrien and Fan, David and Garrido, Quentin and Howes, Russell and Muckley, Matthew and Rizvi, Ammar and Roberts, Claire and Sinha, Koustuv and Zholus, Artem and others},
  journal={arXiv preprint arXiv:2506.09985},
  year={2025}
}

@article{alshammari2025unifying,
  title={I-con: A unifying framework for representation learning},
  author={Alshammari, Shaden and Hershey, John and Feldmann, Axel and Freeman, William T and Hamilton, Mark},
  journal={arXiv preprint arXiv:2504.16929},
  year={2025}
}

@inproceedings{
pope2021intrinsic,
title={The Intrinsic Dimension of Images and Its Impact on Learning},
author={Phil Pope and Chen Zhu and Ahmed Abdelkader and Micah Goldblum and Tom Goldstein},
booktitle={International Conference on Learning Representations},
year={2021},
url={https://openreview.net/forum?id=XJk19XzGq2J}
}

@book{krantz2002implicit,
  title={The implicit function theorem: history, theory, and applications},
  author={Krantz, Steven George and Parks, Harold R},
  year={2002},
  publisher={Springer Science \& Business Media},
}

@article{schneider2023learnable,
  title={Learnable latent embeddings for joint behavioural and neural analysis},
  author={Schneider, Steffen and Lee, Jin Hwa and Mathis, Mackenzie Weygandt},
  journal={Nature},
  volume={617},
  number={7960},
  pages={360--368},
  year={2023},
  publisher={Nature Publishing Group UK London}
}

@inproceedings{kirchhof2023probabilistic,
  title={Probabilistic contrastive learning recovers the correct aleatoric uncertainty of ambiguous inputs},
  author={Kirchhof, Michael and Kasneci, Enkelejda and Oh, Seong Joon},
  booktitle={International Conference on Machine Learning},
  pages={17085--17104},
  year={2023},
  organization={PMLR}
}

@article{khemakhem2020ice,
  title={Ice-beem: Identifiable conditional energy-based deep models based on nonlinear ica},
  author={Khemakhem, Ilyes and Monti, Ricardo and Kingma, Diederik and Hyvarinen, Aapo},
  journal={Advances in Neural Information Processing Systems},
  volume={33},
  pages={12768--12778},
  year={2020}
}

@inproceedings{tian2020contrastive,
  title={Contrastive multiview coding},
  author={Tian, Yonglong and Krishnan, Dilip and Isola, Phillip},
  booktitle={European conference on computer vision},
  pages={776--794},
  year={2020},
  organization={Springer}
}

@inproceedings{geiger2011information,
  title={On the information loss in memoryless systems: the multivariate case},
  author={Geiger, Bernhard C and Kubin, Gernot},
  booktitle={22th International Zurich Seminar on Communications (IZS)},
  year={2012},
  organization={Eidgen{\"o}ssische Technische Hochschule Z{\"u}rich}
}

@article{ruelle1996positivity,
  title={Positivity of entropy production in nonequilibrium statistical mechanics},
  author={Ruelle, David},
  journal={Journal of Statistical Physics},
  volume={85},
  number={1},
  pages={1--23},
  year={1996},
  publisher={Springer}
}

@article{hromadka2025maximum,
  title={Maximum Likelihood Learning of Latent Dynamics Without Reconstruction},
  author={Hromadka, Samo and Biegun, Kai and Fox, Lior and Heald, James and Sahani, Maneesh},
  journal={arXiv preprint arXiv:2505.23569},
  year={2025}
}

@article{mazur1932transformations,
  title={Sur les transformations isom{\'e}triques d’espaces vectoriels norm{\'e}s},
  author={Mazur, Stanis{\l}aw and Ulam, Stanis{\l}aw},
  journal={CR Acad. Sci. Paris},
  volume={194},
  number={946-948},
  pages={116},
  year={1932}
}

@article{brehmer2020flows,
  title={Flows for simultaneous manifold learning and density estimation},
  author={Brehmer, Johann and Cranmer, Kyle},
  journal={Advances in neural information processing systems},
  volume={33},
  pages={442--453},
  year={2020}
}

@article{reizinger2025position,
  title={Position: An Empirically Grounded Identifiability Theory Will Accelerate Self-Supervised Learning Research},
  author={Reizinger, Patrik and Balestriero, Randall and Klindt, David and Brendel, Wieland},
  journal={arXiv preprint arXiv:2504.13101},
  year={2025}
}

@article{sorrenson2023lifting,
  title={Lifting architectural constraints of injective flows},
  author={Sorrenson, Peter and Draxler, Felix and Rousselot, Armand and Hummerich, Sander and Zimmermann, Lea and K{\"o}the, Ullrich},
  journal={arXiv preprint arXiv:2306.01843},
  year={2023}
}

@article{ma2022principles,
  title={On the principles of parsimony and self-consistency for the emergence of intelligence},
  author={Ma, Yi and Tsao, Doris and Shum, Heung-Yeung},
  journal={Frontiers of Information Technology \& Electronic Engineering},
  volume={23},
  number={9},
  pages={1298--1323},
  year={2022},
  publisher={ZUP}
}

@book{krantz2008geometric,
  title={Geometric integration theory},
  author={Krantz, Steven and Parks, Harold},
  year={2008},
  publisher={Springer}
}

@article{grosmark2016diversity,
  title={Diversity in neural firing dynamics supports both rigid and learned hippocampal sequences},
  author={Grosmark, Andres D and Buzs{\'a}ki, Gy{\"o}rgy},
  journal={Science},
  volume={351},
  number={6280},
  pages={1440--1443},
  year={2016},
  publisher={American Association for the Advancement of Science}
}

@article{chen2024your,
  title={Your contrastive learning problem is secretly a distribution alignment problem},
  author={Chen, Zihao and Lin, Chi-Heng and Liu, Ran and Xiao, Jingyun and Dyer, Eva},
  journal={Advances in Neural Information Processing Systems},
  volume={37},
  pages={91597--91617},
  year={2024}
}

@article{jiao2025distribution,
  title={Distribution Matching for Self-Supervised Transfer Learning},
  author={Jiao, Yuling and Ma, Wensen and Sun, Defeng and Wang, Hansheng and Wang, Yang},
  journal={arXiv preprint arXiv:2502.14424},
  year={2025}
}

@inproceedings{
laiz2024self,
title={Self-supervised contrastive learning performs non-linear system identification},
author={Rodrigo Gonz{\'a}lez Laiz and Tobias Schmidt and Steffen Schneider},
booktitle={The Thirteenth International Conference on Learning Representations},
year={2025},
url={https://openreview.net/forum?id=ONfWFluZBI}
}

@article{maaten2008visualizing,
  title={Visualizing data using t-SNE},
  author={Maaten, Laurens van der and Hinton, Geoffrey},
  journal={Journal of machine learning research},
  volume={9},
  number={Nov},
  pages={2579--2605},
  year={2008}
}

@inproceedings{Arora_Khandeparkar_Khodak_Plevrakis_Saunshi_2019,
  title={A theoretical analysis of contrastive unsupervised representation learning},
  author={Saunshi, Nikunj and Plevrakis, Orestis and Arora, Sanjeev and Khodak, Mikhail and Khandeparkar, Hrishikesh},
  booktitle={International conference on machine learning},
  pages={5628--5637},
  year={2019},
  organization={PMLR}
}

@inproceedings{khemakhem2020variational,
  title={Variational autoencoders and nonlinear ica: A unifying framework},
  author={Khemakhem, Ilyes and Kingma, Diederik and Monti, Ricardo and Hyvarinen, Aapo},
  booktitle={International conference on artificial intelligence and statistics},
  pages={2207--2217},
  year={2020},
  organization={PMLR}
}

@inproceedings{galvez2023role,
  title={The role of entropy and reconstruction in multi-view self-supervised learning},
  author={G{\'a}lvez, Borja Rodr{\i}guez and Blaas, Arno and Rodr{\'\i}guez, Pau and Golinski, Adam and Suau, Xavier and Ramapuram, Jason and Busbridge, Dan and Zappella, Luca},
  booktitle={International Conference on Machine Learning},
  pages={29143--29160},
  year={2023},
  organization={PMLR}
}

@article{dawid2024introduction,
  title={Introduction to latent variable energy-based models: a path toward autonomous machine intelligence},
  author={Dawid, Anna and LeCun, Yann},
  journal={Journal of Statistical Mechanics: Theory and Experiment},
  volume={2024},
  number={10},
  pages={104011},
  year={2024},
  publisher={IOP Publishing}
}

@article{gui2024survey,
  title={A survey on self-supervised learning: Algorithms, applications, and future trends},
  author={Gui, Jie and Chen, Tuo and Zhang, Jing and Cao, Qiong and Sun, Zhenan and Luo, Hao and Tao, Dacheng},
  journal={IEEE Transactions on Pattern Analysis and Machine Intelligence},
  volume={46},
  number={12},
  pages={9052--9071},
  year={2024},
  publisher={IEEE}
}

@article{paninski2003estimation,
  title={Estimation of entropy and mutual information},
  author={Paninski, Liam},
  journal={Neural computation},
  volume={15},
  number={6},
  pages={1191--1253},
  year={2003},
  publisher={MIT Press}
}

@inproceedings{chen2020simple,
  title={A simple framework for contrastive learning of visual representations},
  author={Chen, Ting and Kornblith, Simon and Norouzi, Mohammad and Hinton, Geoffrey},
  booktitle={International conference on machine learning},
  pages={1597--1607},
  year={2020},
  organization={PmLR}
}

@inproceedings{chen2021exploring,
  title={Exploring simple siamese representation learning},
  author={Chen, Xinlei and He, Kaiming},
  booktitle={Proceedings of the IEEE/CVF conference on computer vision and pattern recognition},
  pages={15750--15758},
  year={2021}
}

@article{grill2020bootstrap,
  title={Bootstrap your own latent-a new approach to self-supervised learning},
  author={Grill, Jean-Bastien and Strub, Florian and Altch{\'e}, Florent and Tallec, Corentin and Richemond, Pierre and Buchatskaya, Elena and Doersch, Carl and Avila Pires, Bernardo and Guo, Zhaohan and Gheshlaghi Azar, Mohammad and others},
  journal={Advances in neural information processing systems},
  volume={33},
  pages={21271--21284},
  year={2020}
}

@article{halvagal2023combination,
  title={The combination of Hebbian and predictive plasticity learns invariant object representations in deep sensory networks},
  author={Halvagal, Manu Srinath and Zenke, Friedemann},
  journal={Nature neuroscience},
  volume={26},
  number={11},
  pages={1906--1915},
  year={2023},
  publisher={Nature Publishing Group US New York}
}

@article{mohammadi2025understanding,
  title={Understanding cortical computation through the lens of joint-embedding predictive architectures},
  author={Mohammadi, Ashena Gorgan and Halvagal, Manu Srinath and Zenke, Friedemann},
  journal={bioRxiv},
  pages={2025--11},
  year={2025},
  publisher={Cold Spring Harbor Laboratory}
}

@article{
bizeul_probabilistic_2024,
title={A Probabilistic Model behind Self- Supervised Learning},
author={Alice Bizeul and Bernhard Sch{\"o}lkopf and Carl Allen},
journal={Transactions on Machine Learning Research},
issn={2835-8856},
year={2024},
url={https://openreview.net/forum?id=QEwz7447tR},
note={}
}

@article{kugelgen_self-supervised_2022,
  title={Self-supervised learning with data augmentations provably isolates content from style},
  author={Von K{\"u}gelgen, Julius and Sharma, Yash and Gresele, Luigi and Brendel, Wieland and Sch{\"o}lkopf, Bernhard and Besserve, Michel and Locatello, Francesco},
  journal={Advances in neural information processing systems},
  volume={34},
  pages={16451--16467},
  year={2021}
}

@article{daunhawer_identifiability_2023,
  title={Identifiability results for multimodal contrastive learning},
  author={Daunhawer, Imant and Bizeul, Alice and Palumbo, Emanuele and Marx, Alexander and Vogt, Julia E},
  journal={arXiv preprint arXiv:2303.09166},
  year={2023}
}

@article{bardes2024revisiting,
  title={Revisiting feature prediction for learning visual representations from video},
  author={Bardes, Adrien and Garrido, Quentin and Ponce, Jean and Chen, Xinlei and Rabbat, Michael and LeCun, Yann and Assran, Mahmoud and Ballas, Nicolas},
  journal={arXiv preprint arXiv:2404.08471},
  year={2024}
}

@inproceedings{caron2021emerging,
  title={Emerging properties in self-supervised vision transformers},
  author={Caron, Mathilde and Touvron, Hugo and Misra, Ishan and J{\'e}gou, Herv{\'e} and Mairal, Julien and Bojanowski, Piotr and Joulin, Armand},
  booktitle={Proceedings of the IEEE/CVF international conference on computer vision},
  pages={9650--9660},
  year={2021}
}

@inproceedings{
assran_hidden_2022,
title={The hidden uniform cluster prior in self-supervised learning},
author={Mido Assran and Randall Balestriero and Quentin Duval and Florian Bordes and Ishan Misra and Piotr Bojanowski and Pascal Vincent and Michael Rabbat and Nicolas Ballas},
booktitle={The Eleventh International Conference on Learning Representations },
year={2023},
url={https://openreview.net/forum?id=04K3PMtMckp}
}

@article{solo,
  author  = {Victor Guilherme Turrisi da Costa and Enrico Fini and Moin Nabi and Nicu Sebe and Elisa Ricci},
  title   = {solo-learn: A Library of Self-supervised Methods for Visual Representation Learning},
  journal = {Journal of Machine Learning Research},
  year    = {2022},
  volume  = {23},
  number  = {56},
  pages   = {1-6},
  opturl     = {http://jmlr.org/papers/v23/21-1155.html}
}

@inproceedings{
tschannen2019mutual,
title={On Mutual Information Maximization for Representation Learning},
author={Michael Tschannen and Josip Djolonga and Paul K. Rubenstein and Sylvain Gelly and Mario Lucic},
booktitle={International Conference on Learning Representations},
year={2020},
}

@inproceedings{noroozi_unsupervised_2016,
	address = {Cham},
	series = {Lecture {Notes} in {Computer} {Science}},
	title = {Unsupervised {Learning} of {Visual} {Representations} by {Solving} {Jigsaw} {Puzzles}},
	isbn = {978-3-319-46466-4},
	doi = {10.1007/978-3-319-46466-4_5},
	language = {en},
	booktitle = {Computer {Vision} – {ECCV} 2016},
	publisher = {Springer International Publishing},
	author = {Noroozi, Mehdi and Favaro, Paolo},
	editor = {Leibe, Bastian and Matas, Jiri and Sebe, Nicu and Welling, Max},
	year = {2016},
	pages = {69--84},
}
\bibliographystyle{icml2026}

\newpage
\appendix
\renewcommand{\thefigure}{A\arabic{figure}}
\setcounter{figure}{0}
\renewcommand{\thetable}{A\arabic{table}}
\setcounter{table}{0}

\onecolumn

\makeatletter
\def\addcontentsline#1#2#3{%
  \addtocontents{#1}{\protect\contentsline{#2}{#3}{\thepage}{\ifHy@bookmarks\@currentHref\fi}}%
}
\makeatother

\section*{Appendix Contents}
\startcontents[appendix]
\printcontents[appendix]{l}{1}{\setcounter{tocdepth}{2}}

\section{Mathematical appendix}

\subsection{Notation} \label{app:notation}

We make use of the following mathematical notation
\begin{itemize}
    \item $\langle f(x)\rangle_{P(x)}=\int f(x)P(x) \; dx$ denotes the average of $f(x)$ w.r.t. $P(x)$.
    \item $\delta(x)$ is the Dirac delta function.
    \item $H_P[x]=-\int P(x) \log P(x)\;dx$ is the entropy of $P(x)$.
    \item $H_P[x|y]=-\int P(x,y) \log P(x|y)\;dx$ is the conditional entropy of $P(x|y)$.
    \item $I_P[x,y]=H_P[x] + H_P[y] - H_P[x,y]$ is the mutual information between $x$ and $y$.
    \item $D_{KL}[P(x) \parallel Q(x)]=\int P(x) \log(P(x)/Q(x))\;dx $ is the Kullback-Leibler divergence of $P(x)$ and $Q(x)$.
    \item $|M|$ denotes the determinant of the matrix $M$.
    \item $J_f(x)=\left[\frac{\partial f(x)}{\partial x_1} \dots \frac{\partial f(x)}{\partial x_n}\right]$ is the Jacobian matrix of $f$ evaluated at $x$.
    \item $\text{SG}[\cdot]$ is the stop-gradient operator, which prevents optimization gradients from flowing through this node
\end{itemize}

\subsection{Related frameworks}

\subsubsection{Reverse KL Divergence} \label{app:other}

A formalism closely related to ours was proposed by \citet{zimmermann_contrastive_2022}.
In their theory, the latent distribution is defined through the conditional $R(z'|z)=\frac{1}{Z}\exp(-E(f(x),f(x')))$, where $E(\cdot)$ is an energy function and $f$ encodes $x$ to $z$. The goal function is then (implicitly) defined as
\begin{equation}
    \mathcal{F}_\text{MDL} = -D_{KL}[P_\theta(z'|z) \parallel R(z'|z)] \propto \langle \log R(z'|z) \rangle_{P_\theta(z'|z)}\;,
\end{equation}
where the proportionality holds for fixed $\theta$. This is also called the \textit{reverse KL divergence} \cite{papamakarios2021normalizing}.
While here it is not necessary to compute an entropy term, the difficulty with this approach is that, instead of averaging over the data distribution, it requires to average over the latent distribution $P_\theta(z,z')$. 
This average can only be taken in specific circumstances, e.g., for uniform distributions on the sphere, which hinders a more general applicability of the framework, and renders relating it to the diversity of SSL approaches difficult.

\subsubsection{I-Con}

Another similar loss has been proposed by \citet{alshammari2025unifying}.
I-Con defines two conditional neighborhood distributions over data points $i, j \in \mathcal{X}$: a supervisory distribution $p_\theta(j|i)$ encoding assumed relationships between samples (e.g., augmentation pairs, class labels, $k$-nearest neighbors), and a learned distribution $q_\phi(j|i)$ derived from the representations. The I-Con loss then takes the form
\begin{equation}
    \mathcal{L}_{\text{I-Con}} = \int_{i \in \mathcal{X}} D_{\mathrm{KL}}\left( p_\theta(\cdot|i) \| q_\phi(\cdot|i) \right).
\end{equation}
A key structural difference to \ac{DM} is that I-Con treats the supervisory signal $p_\theta(j|i)$ as given (or separately constructed), and optimizes the learned representation to match it. 
\ac{DM} instead specifies a generative latent model $P_\theta(\mathbf{z}' | \mathbf{z})P_\theta(\mathbf{z})$ and jointly optimizes both the encoder and the model parameters $\theta$ to minimize the divergence in latent space. 
An unresolved question is wether I-Con can also establish identifiability guarantees, similar to the forward or reverse KL divergence approaches.

\subsubsection{Variational bound on mutual information} \label{app:mibound}

Here we clarify the relation of the mutual information bound used in the main paper, and the bound most previous work employs \cite{barber2004algorithm,poole2019variational,wang2020understanding,shwartz2023information,galvez2023role}.
This second bound takes the form
\begin{align}
    \begin{split}
        I_R[z,z'] \geq \hat I_R[z,z'] = \left\langle \log P_\theta(z|z') \right\rangle_{R(z,z')} + H_R[z] \;,
    \end{split}
\end{align}
which seems to differ from $\mathcal{F}_\text{MI}$. 
However, these two MI bounds are easy to relate. 
To show this, we assume that the latent model employs an empirical prior, and thus takes the form $P_\theta(z,z')=P_\theta(z|z')R(z')$.
Under this model $\mathcal{F}_\text{MI}$ simplifies to 
\begin{align}
\begin{split}
    \mathcal{F}_\text{MI} &= -D_{KL}[R(z,z') \parallel P_\theta(z,z')] + I_R[z,z'] \\
    &= \left\langle \log P_\theta(z,z') \right\rangle_{R(z,z')} + H_R[z]+ H_R[z'] \\
    &= \left\langle \log P_\theta(z|z') + \log R(z') \right\rangle_{R(z,z')} + H_R[z]+ H_R[z'] \\
    &= \left\langle \log P_\theta(z|z') \right\rangle_{R(z,z')} + H_R[z] \;.
\end{split} \end{align}

\subsection{Latent distribution matching framework overview} \label{app:overview}

\begin{figure}
    \centering
    \includegraphics{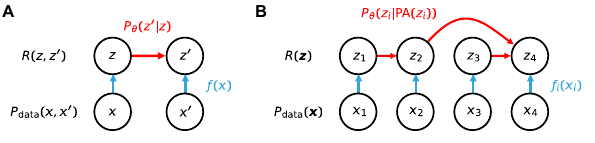}
    \caption{Examples of \ac{DM} models. 
    \textbf{A} In a simple model with two inputs the goal is to predict one latent variable from the other.
    \textbf{B} In general, any loop-free graphical model can form the predictor on any number of inputs, possibly with distinct encoders $f_i(x_i)$.}
    \label{fig:app_intro}
\end{figure}

In its most general form the goal function can be written as
\begin{empheq}[box=\fbox]{equation}
    \begin{aligned} \label{eq:overviewgoalfunction}
    \mathcal{F}_\text{LDM} &= -D_{KL}[R(\boldsymbol{z}) \parallel P_\theta(\boldsymbol{z})] \\
    &=\underbrace{\left\langle \sum_{i} \log P_\theta(z_i|\text{PA}(z_i))\right\rangle_{R(\boldsymbol z)}}_{\text{Likelihood}} + 
    \underbrace{H_R[\boldsymbol z] \vphantom{\left\langle \sum_{z_i}\right\rangle_{R(\boldsymbol z)}}}_{\text{Entropy}}
    \end{aligned}
\end{empheq}
where $\boldsymbol z = \{z_i : i \in [1\dots N]\}$ and $\text{PA}(z_i)$ are the parents of $z_i$ in the probabilistic model. 
The parents of $z_i$ can also include dependencies on additional conditioning variables that are independent of other observations, which we did not consider in this paper. These can be, for example, control signals, time indices, etc.
While the joint entropy $H_R[\boldsymbol z]$ can be difficult to maximize, given the arguments in this paper (Section \ref{app:MIandentropy}), it might be replaced by the sum of individual entropies $\sum_i H_{R_i}[z_i]$ with indistinguishable results.

The empirical latent distribution induced by the data distribution and the encoder is defined by
\begin{empheq}[box=\fbox]{align} \label{eq:inducedlatent}
    R(\boldsymbol z)=\left\langle\prod_i R_i(z_i|x_i)\right\rangle_{P_\text{data}(\boldsymbol{x})} 
\end{empheq} 
This formulation also subsumes models with multimodal inputs and multiple encoders, or temporal models (Figure \ref{fig:app_intro}).
In the following we only employ deterministic encoders $R_i(z_i|x_i)=\delta(z_i - f_i(x_i))$, as motivated by the discussion in Section~\ref{app:mldm}, while probabilistic encoders are discussed in Section \ref{ap:mnist}.

\subsection{Motivation of latent distribution matching framework}
\label{app:motivation}

\subsubsection{Relation between maximum likelihood learning and distribution matching} \label{app:mldm}

We can motivate the \ac{DM} framework through the maximum likelihood objective, mirroring the approach employed in \ac{ICA} \cite{hyvarinen1999independent} or normalizing flow networks \cite{papamakarios2021normalizing}.
Here, models often compute the data-likelihood in latent, instead of data space, by applying the change of variables formula.
Specifically, if the transformation $g$ from latent variables $z$ to data $x$ is invertible 
\begin{align} \begin{split}
    \mathcal{L}(\theta,g)&= \left\langle \log P_\theta(x) \right\rangle_{P_\text{data}(x)} \\
    &= \left\langle \log P_\theta(g^{-1}(x)) + \log |J_{g^{-1}}(x)| \right\rangle_{P_\text{data}(x)}\;,
\end{split} \end{align}
where $P_\theta$ is the model distribution with parameters $\theta$ and $J_{g^{-1}}(x)$ is the Jacobian of the inverse transformation \citep[see also][]{papamakarios2021normalizing}.
Here and in the following we assume that $g$ (and $f$) is a smooth and continuous function.
The change of variables as above requires data and latent space to be of the same dimensionality  and thus cannot be applied to functions that change the number of dimensions, as employed here.
However, for the following argument it is sufficient if the encoder is invertible \textit{on the data manifold} $\mathcal{M}_\text{data}$, which is also possible if the data manifold is embedded into a higher dimensional space.
This can be shown via a generalized change of variables formula that has been previously used, e.g., in manifold normalizing flows \cite{brehmer2020flows}, and takes the form
\begin{align} \begin{split}
    \mathcal{L}(\theta,g)&= \left\langle \log P_\theta(x) \right\rangle_{P_\text{data}(x)} \\
    &= \left\langle \log P_\theta(g^{-1}(x)) + \frac{1}{2}\log |J_{g^{-1}}(x)J_{g^{-1}}(x)^T| \right\rangle_{P_\text{data}(x)}\;.
\end{split} \end{align}

We can relate the maximum likelihood objective to a \ac{DM} goal by noticing that for any transformation $f$ that is invertible on the data manifold, the differential entropy changes as
\begin{align} \begin{split}
     H_P[x] = H_P[f(x)] - \left\langle \frac{1}{2}\log |J_{f}(x)J_{f}(x)^T| \right\rangle_{P(x)}\;,
\end{split} \end{align}
which is also a consequence of the change of variables formula for manifolds.
Thus, if we define the encoder $f(x)=g^{-1}(x)$ we can rewrite the log-likelihood in an exclusively latent-space form, by adding the constant data entropy \cite{papamakarios2021normalizing,cardoso2002infomax}
\begin{align} \begin{split}
    \mathcal{L}(\theta,f)&\propto \left\langle \log P_\theta(f(x)) + \frac{1}{2}\log |J_{f}(x)J_{f}(x)^T| \right\rangle_{P_\text{data}(x)} + H_{P_\text{data}}[x] \\
    &= \left\langle \log P_\theta(f(x)) \right\rangle_{P_\text{data}(x)} + H_{P_\text{data}}[f(x)] \\
    &= - D_{KL} [P_\text{data}(f(x)) \parallel  P_\theta(f(x))]\;.
    \label{eq:likelihood_equivalence2}
\end{split} \end{align}
In summary, for an invertible data generation process, maximum likelihood learning is equivalent to \ac{DM} in latent space.
This also means that, in principle, we can estimate the log-likelihood of a datapoint based on the change of variables formula above; we do not explore this here, but see \citet{balestriero2025gaussian}.
The other observation we want to highlight is that, while manifold normalizing flows is concerned with maximizing or regularizing the Jacobian term $\log |J_f(x)J_f(x)^T|$, it is equivalent to maximize the latent entropy instead.  
Notably, while commonly manifold normalizing flow networks are constrained to be injective on the data manifold, we do not constrain the networks, but instead fully rely on the regularization through entropy maximization (Section \ref{app:entropy_invertibility}).
This is closely related to approaches in  manifold normalizing flow that aim to drop architectural constraints to improve expressibility \cite{sorrenson2023lifting}.

\subsubsection{Approximate invertibility through entropy maximization} \label{app:entropy_invertibility}

In the proposed goal function the encoder is encouraged to be both locally and globally invertible on the data manifold through the entropy term. 
To see how it encourages local invertibility, we assume the data lies on a manifold $\mathcal{M}_\text{data}$ of intrinsic dimension matching the latent space. 
Here we define the manifold as the support of the data distribution $\mathcal{M}_\text{data}=\text{supp}(P_\text{data})$. 
For the encoder to be a local diffeomorphism (invertible on the manifold), the Jacobian restricted to the tangent space must have full rank \citep[][p.~125]{krantz2008geometric}. 
This is satisfied if for all $x\in \mathcal{M}_\text{data}$ (Inverse function theorem): 
\begin{align} \begin{split} 
& \sqrt{|J_{f}(x)J_{f}(x)^T|} > 0 \\ 
\Leftrightarrow\quad &\frac{1}{2}\log |J_{f}(x)J_{f}(x)^T| > -\infty 
\end{split} \end{align} 
On the other hand, through a generalized change of variables formula for manifolds, the latent entropy has the upper bound \cite{geiger2011information}: 
\begin{align} \begin{split} \label{eq:entropybound}
H_P[f(x)] &\leq H_{P}[x] + \left\langle \frac{1}{2}\log |J_{f}(x)J_{f}(x)^T| \right\rangle_{P(x)} \;,
\end{split} \end{align} 
which is an equality for injective (non-folding) transformations.
Consequently, if the encoder is locally invertible over the data distribution upon initialization, latent entropy maximization encourages it to remain locally invertible during training by penalizing singular Jacobians, i.e.,  for which the log-determinant would approach $-\infty$ (see also Fig.~\ref{fig:A3}A). 

Global invertibility on the data manifold of $f$ requires additional assumptions next to non-singular Jacobians, for example that $f$ is a proper map as assumed in \textit{Hadamard's global inverse function theorem} \citep[][p.~127]{krantz2002implicit}.
However, entropy maximization also provides an inductive bias towards global invertibility. 
Consider a mapping that is locally invertible but globally non-injective (e.g., a 'folded' or 'wrapped' manifold). 
Such a mapping inevitably superimposes probability mass from distinct data regions onto the same latent locations.
If the amount the distribution can be 'stretched' by $f$ (the log-determinant of the Jacobian) is limited, which typically is a result of simultaneously maximizing the model log-likelihood, this creates regions of higher probability density.
Since differential entropy is maximized by spreading probability mass as uniformly as possible over the available volume, any such superposition (folding) results in a suboptimal entropy compared to an unfolded, globally injective representation. 

To be more precise, how much information is lost by such a folding of the manifold is quantified by the missing term in the previous inequality (Equation \ref{eq:entropybound}), which is also termed the \textit{folding entropy} $H_{P}[x|f(x)]$.
This leads to the expression of the latent entropy, which---for finite Jacobian---is maximized for non folding manifolds with $H_{P}[x|f(x)]=0$ \cite{geiger2011information,ruelle1996positivity}
\begin{align} \begin{split} 
H_P[f(x)] = H_{P}[x] + \left\langle \frac{1}{2}\log |J_{f}(x)J_{f}(x)^T| \right\rangle_{P(x)} - H_{P}[x|f(x)] 
\end{split} \end{align} 
This equation holds for piecewise invertible $f$, i.e., most non-degenerate neural networks---see also the derivation in Section~\ref{app:folding_entropy}.
Thus, the objective function naturally penalizes topological defects and encourages the encoder to effectively 'unfold' the data manifold into the latent space.
Note that \ac{DM} might still lead to non-injective encoders under a misspecification of the model, e.g., if the prior $P_\theta(z)$ is chosen too narrowly, or the conditional $P_\theta(z'|z)$ is too loose and permits extreme stretching of the manifold---or, of course, if entropy estimation is inaccurate. 

Previously, invertibility of the encoder on the data manifold has been proven to result from contrastive learning under different assumptions by \citet{zimmermann_contrastive_2022,schneider2023learnable,reizinger2024cross}.

\subsubsection{Maximization of mutual information through entropy maximization} \label{app:MIandentropy}

Given the argument in~\ref{app:entropy_invertibility}, we can link MI maximization between related inputs to latent entropy maximization.
First, we note that through the data processing inequality we know that for any encoder $f$ MI can only decrease
\begin{align}
    \begin{split}
        I[x,x'] \geq I[f(x),f(x')]\;.
    \end{split}
\end{align}
As discussed in the introduction, a fundamental fact about MI is that if the encoder $f$ is invertible on the data manifold, then MI is preserved
\begin{align}
    \begin{split}
        I[x,x'] = I[f(x),f(x')]\;.
    \end{split}
\end{align}
Clearly, this is the maximum MI achievable.
As discussed in~\ref{app:entropy_invertibility}, invertibility can be encouraged simply by maximization of the latent entropy (while restricting the stretching of the latent manifold), which thus also implies approximate maximization of MI.

\subsection{Entropy estimators} \label{app:estimators}

Before discussing the derivations of specific goal functions, we briefly introduce the entropy estimators used to approximate the uniformity terms.

\subsubsection{Kernel density estimation (KDE)} \label{app:kde}
KDE first approximates the continuous probability density function (PDF) $R(z)$ through probability kernels $\kappa$ around samples $z_j\sim R(z)$, and based on this approximates the entropy \cite{ahmad1976nonparametric}.
The kernel approximation of the pdf is given by $\hat R(z) = \frac{1}{nh} \sum_j \kappa\left( \frac{d(x,z_j)}{h} \right)$, where $d(\cdot,\cdot)$ is some distance measure, $n$ the number of samples, and $h$ is the bandwidth of the kernel.
The entropy estimate is then given by
\begin{align}
    \begin{split}
        \hat H_\text{KDE}[z] &= - \frac{1}{n}\sum_i\log \hat{R}(z_i)  \propto - \frac{1}{n}\sum_i\log \sum_j \kappa\left( \frac{d(z_i,z_j)}{h} \right)\;.
    \end{split}
\end{align}
Note that we immediately can identify the comparison of $z_i$ and $z_j$ with the contrastive principle of SSL.

\subsubsection{K-nearest neighbor (kNN) estimation} \label{app:knn}
Another robust estimator for the entropy of high-dimensional distributions is the Kozachenko and Leonenko nearest neighbor estimator \cite{kozachenko1987sample}, which later has been generalized to kNN estimators \cite{lombardi2016nonparametric}.
The estimator constructs a ball around every sample $z_i$, with radius given by the $p$-norm distance $\epsilon_i=|z_i - \text{kNN}(z_i)|_p$ between $z_i$ and its kNN. Similar to the KDE, the sum of balls then approximate the PDF under the assumption of uniform distribution within the balls. This gives rise to the entropy estimator
\begin{align}
    \begin{split}
        \hat H_\text{kNN}[z] \propto \frac{D}{n} \sum_i \log \epsilon_i \;.
    \end{split}
\end{align}
One advantage of this estimator over KDE is that its parameters $p$ and $k$ are more straightforward to choose than the bandwidth $h$.

\subsubsection{Parametric estimation} \label{app:parametric}
Entropy can also be estimated via a parametric approach, by assuming that the data is distributed according to a distribution with a closed form entropy expression $\hat R_\phi(z)$, and estimating the parameters $\phi$ of this distribution from the data $z_j\sim R(z)$.
In practice, to the best of our knowledge, the only non-trivial distribution that allows to feasibly compute the entropy in high dimensions is the multidimensional normal and derived distributions (e.g., log-normal). 
For a $D$-dimensional normal $\hat R_\phi(z)=\mathcal{N}(z;\mu,\Sigma)$ we can estimate $\mu=\frac{1}{N}\sum_jz_j$ and $\Sigma = \frac{1}{N-1}\sum_j(z_j-\mu)(z_j-\mu)^T$, and the entropy is given by the log-determinant of the covariance matrix 
\begin{align}
    \hat H_\text{LogDet}[z]=\frac{D}{2}(1 + \log(2\pi)) + \frac{1}{2} \log |\Sigma|\;.
\end{align}
In practice, for high dimensional distributions, the determinant is numerically complex to compute, and additional approximations are needed \cite{zhouyin2025understanding}.
To provide additional intuition, a very simple approximation can be obtained by assuming that off-diagonal terms are small, $\Sigma = D + A$, with $D$ diagonal and $A$ small. 
Performing a low-order Taylor expansion yields $\log |\Sigma|= \sum_i \log \Sigma_{ii} - \frac{1}{2}\sum_{j\neq i}\frac{\boldsymbol{\Sigma}_{ij}\boldsymbol{\Sigma}_{ji}}{\boldsymbol{\Sigma}_{ii}\boldsymbol{\Sigma}_{jj}}+\mathcal{O}(A^3)$.
From this it becomes clear that entropy is maximized for maximum variance and minimum covariance terms.

\subsubsection{Conditional entropy estimation with predictor} \label{app:condent}
The conditional entropy of $R(z'|z)$ is given by $H_R[z'|z] = -\langle \log R(z'|z) \rangle_{R(z',z)}$.
Here we will evaluate the average $\langle \cdot \rangle_{R(z',z)}$ via samples, and the main question regards how the conditional distribution is approximated.
The most straightforward approach is to learn a predictor $\hat R_\theta(z'|z)$ by minimizing the cross entropy $ -\langle \log \hat R_\theta(z'|z) \rangle_{R(z',z)}$ with respect to the parameters $\theta$. 
The entropy can then be approximated and maximized with the plug-in estimator
\begin{align}
    \begin{split}
        \hat H_\text{pred}[z'|z] = - \frac{1}{N}\sum_{(z',z)} \log \hat R_\theta(z'|z)\;.
    \end{split}
\end{align}
Intriguingly, the cross entropy to train the predictor is exactly equal to the entropy estimator of the conditional entropy.
This means that when we train a predictor $\hat R_\theta(z'|z)$ via cross-entropy, we are simultaneously obtaining an entropy estimate for free, without needing a separate estimator.
We will discuss how this observation is leveraged implicitly by SSL models with predictor and \ac{stopgrad} in Section~\ref{app:predstopgrad}.

\subsection{Goal function derivations} \label{app:derivations}

\subsubsection{Models with pairs of inputs} 
We will first treat models with pairs of inputs, and later discuss temporal models. Specifically, the goal is to derive the likelihood and entropy terms of the goal functions, which are special cases of the general goal function (Eq \ref{eq:overviewgoalfunction})
\begin{align}
    \mathcal{F}_\text{LDM} &=
    \langle \log P_\theta(z,z')\rangle_{R(z,z')} + 
    H_R[z,z'] \\
    \mathcal{F}_\text{MI} &=
     \langle\log P_\theta(z,z')\rangle_{R(z,z')} + 
    2H_R[z]
\end{align}
Given the definition of the induced latent distribution (Eq \ref{eq:inducedlatent}) and deterministic encoders, the average likelihood becomes
\begin{align}\begin{split}
    \langle \log P_\theta(z,z')\rangle_{R(z,z')} &= \int (\log P_\theta(z,z') \left\langle R(z|x)R(z'|x') \right\rangle_{P_\text{data}(x,x')} \, dz \, dz' \\
     &=
    \left\langle \int (\log P_\theta(z,z')) R(z|x)R(z'|x') \, dz \, dz' \right\rangle_{P_\text{data}(x,x')}\\
     &=
    \left\langle \log P_\theta(f(x),f(x'))  \right\rangle_{P_\text{data}(x,x')} \\
     &=
    \left\langle \log P_\theta(f(x')|f(x))  + \log P_\theta(f(x))  \right\rangle_{P_\text{data}(x,x')} \; ,
\end{split}\end{align}
where in the third step we used $R(z|x)=\delta(z - f(x))$.
In the last step we used $P_\theta(z,z')=P_\theta(z'|z)P_\theta(z)$.

The following derivations will first specify assumptions for $P_\theta(z'|z)$ and $P_\theta(z)$, and then approximations for the entropy, based on the estimators outlined in detail in Section \ref{app:estimators}.
See also the derivations in \citet{wang2020understanding,shwartz2023information,galvez2023role}, which employed the MI estimator derived from $\mathcal{F}_\text{MI}$ in \ref{app:mibound}.

\paragraph{VICReg}\label{app:vicreg}

We make the choices
\begin{align} \begin{split}
    P_\theta(z) &= Flat, \\
    P_\theta(z'|z) &= \mathcal{N}(z';\mu=z,\Sigma=\sigma^2 I)\;.
\end{split} \end{align}
The likelihood term simplifies to 
\begin{align} \begin{split}
    \left\langle \log P_\theta(f(x')|f(x))  + \log P_\theta(f(x))  \right\rangle_{P_\text{data}(x,x')} \\
    = \left\langle \log \mathcal{N}(f(x');\mu=f(x),\Sigma=\sigma^2 I) \right\rangle_{P_\text{data}(x,x')} \\
    \propto -\left\langle \frac{1}{2\sigma^2}  \parallel f(x) - f(x') \parallel ^2 \right\rangle_{P_\text{data}(x,x')}\;.
\end{split} \end{align}
To estimate the entropy, we approximate the latent distribution with a Normal distribution  $R(z,z')\approx\mathcal N([z,z']; \mu_{z,z'}, \Sigma_{z,z'})$ or $R(z)\approx\mathcal N(z;\mu_z, \Sigma_z)$ with $\mu$ the empirical mean and $\Sigma$ the empirical covariance matrix. This leads to the closed form entropy approximation
\begin{align} \begin{split}
    H_R[z,z'] &\propto \frac{1}{2} \log |\Sigma_{z,z'}| \\
    H_R[z] &\propto \frac{1}{2} \log |\Sigma_{z}|\;.
\end{split} \end{align}
While there is no closed form for the determinant of the covariance matrix that can be optimized easily, it is clear that a maximum entropy Gaussian is achieved with maximal variance and minimal off-diagonal covariance terms. 
One potential approach to show this is to assume that the covariance is close to diagonal ($D$ diagonal and $A$ small) and expand the log-determinant
\begin{equation}
    \log |\Sigma| = \log | D + A | \approx \sum_i \log \Sigma_{ii} - \sum_{i\neq j} \text{C}_{ij}\;,
\end{equation}
where the first term promotes large variance and the second small covariance $C_{ij}=\Sigma_{ij}^2/(\Sigma_{ii}\Sigma_{jj})$ terms. 
Overall, VICReg can be understood to implement an approximation to the goal of minimizing
\begin{equation}
    \mathcal{F}_\text{MI} = -\frac{1}{2\sigma^2} \left\langle \parallel f(x) - f(x') \parallel ^2 \right\rangle_{P_\text{data}(x,x')} + \log |\Sigma_{z}|\;.
\end{equation}
See also \citet{shwartz2023information}, which finds a similar goal function based on principles of \ac{MI} maximization.

\paragraph{SimCLR}\label{app:simclr}

We make the choices, with latents $z$ lying on the unit sphere
\begin{align} \begin{split}
    P_\theta(z) &= \frac{1}{Z}, \\
    P_\theta(z'|z) &= \frac{1}{Z} \exp (\beta z^Tz')\;.
\end{split} \end{align}
The likelihood term then becomes
\begin{align} \begin{split}
    &\left\langle \int_S (\log P_\theta(z,z')) R(z|x)R(z'|x') \, dz \, dz' \right\rangle_{P_\text{data}(x,x')} \\
    &\propto \left\langle \int_S \beta z^Tz' \delta(z - f(x))\delta(z' - f(x'))\, dz \, dz'\right\rangle_{P_\text{data}(x,x')} \\
    &= \left\langle \beta f(x)^Tf(x') \right\rangle_{P_\text{data}(x,x')}\;.
\end{split} \end{align}
The Entropy term becomes, using the KDE approach with von Mises-Fisher kernels with bandwidth $\gamma$
\begin{align} \begin{split}
    \hat H_\text{KDE}[z] &\propto - \frac{1}{N} \sum_i  \log \left[ \sum_j  \kappa(f(x_i), f(x_j),\gamma) \right] \\
    &= - \frac{1}{N} \sum_i  \log \left[ \sum_j  1/Z \exp(\gamma^{-1} f(x_i)^T f(x_j)) \right] \\
    &\propto - \frac{1}{N} \sum_i  \log \left[ \sum_j  \exp(\gamma^{-1} f(x_i)^T f(x_j)) \right]\;.
\end{split} \end{align}
If the entropy is approximated via KDE with bandwidth $\gamma = 1/\beta$ then based on $\mathcal{F}_\text{LDM}$ we find the goal function
\begin{align} \begin{split}
    \mathcal{F} = \left\langle\beta f(x)^Tf(x') \right\rangle_{P_\text{data}(x,x')} - \left\langle \log \langle\exp\{\beta(f(x)^Tf(x^-) + f(x')^Tf(x'^-))\}\rangle_{P_\text{data}(x^-,x'^-)} \right\rangle_{P_\text{data}(x,x')}\;.
\end{split} \end{align}
Similar to VICReg, the original goal function can be recovered from $\mathcal{F}_\text{MI}$ instead as
\begin{align} \begin{split}
    \mathcal{F} = \left\langle\beta f(x)^Tf(x') \right\rangle_{P_\text{data}(x,x')} -  2\left\langle \log \langle\exp\{\beta f(x)^Tf(x^-)\}\rangle_{P_\text{data}(x^-)} \right\rangle_{P_\text{data}(x)}\;.
\end{split} \end{align}
Technically speaking, the SimCLR goal differs from this goal function in that in SimCLR the negative samples $x^-$ include $x'$, the positive sample, while in \ac{DM} all samples in the entropy are i.i.d.---in practice, however, we did not observe a significant difference between the two.

\paragraph{SSL on the simplex} \label{app:simplex}

Next to latent variables on the plane and sphere, we can also consider variables on the simplex $\Delta^{D-1}=\left\{z\in\mathbb{R}:\sum_iz_i=1\right\}$, which can be obtained with a softmax output layer.
Often, points on the simplex are understood as probability distributions over categorical variables, but we will treat them here simply as points, allowing us to understand encoders as deterministic functions, as before.
This leads to loss functions related to those used in the DINO family \cite{caron2021emerging}.

We make the choices, with latents $z$ lying on the simplex
\begin{align} \begin{split}
    P_\theta(z) &= \frac{1}{Z}, \\
    P_\theta(z'|z) &= \text{Dir}(z', \tau z+1)=\frac{1}{B(\tau z+1)}\prod_k {z'_{k}}^{\tau z_{k}}\;.
\end{split} \end{align}
where here and in the following $k$ indexes the \textit{components} of z, $\text{Dir}(p, \alpha)=\frac{1}{B(\alpha)}\prod_{k} z_{k}^{\alpha_{k} -1}$ is the Dirichlet distribution, and $B(\alpha)=\frac{\prod_{k} \Gamma(\alpha_{k})}{\Gamma\left(\sum_{k} \alpha_{k}\right)}$. In this model, $\tau$ can be understood as a precision parameter, similar to precision in, e.g., a Gaussian distribution. 
Note that the mean of the conditional distribution is $\frac{\alpha}{\sum_k\alpha_k}=\frac{\tau z+1}{\tau + K}$, but the mode is $\frac{\alpha - 1}{\sum_k\alpha_k - K}=z$, justifying this choice of model.

The likelihood term for this model (dropping constant terms) is
\begin{align}\begin{split}
    \langle \log P_\theta(z,z')\rangle_{R(z,z')} &\propto \left\langle  \tau {z}^T \log (z') - \sum_k \log\Gamma(\tau z_{k} + 1) \right\rangle_{R(z,z')} 
    \; ,
\end{split}\end{align}
With Dirichlet-based KDE entropy, we can derive
\begin{align} \begin{split}
    \hat H_\text{KDE}[z] &\propto - \frac{1}{N} \sum_i  \log \left[ \sum_j  \text{Dir}(z_i, \tau z_j) \right] \\
    &\propto-\frac{1}{N}\sum_{i}\log\left[\sum_{j}\exp \left(\tau z_j^T \log (z_i)-\sum_{k} \log\Gamma(\tau z_{jk}+1) \right)\right].
\end{split} \end{align}

Taken together we arrive at the DINO alignment term $\tau\sum_k  z_{k} \log (z'_{k})$, and an effective collapse prevention term that depends on contrastive samples and the log Gamma term from the normalization.
The DINO models, in contrast, rely on more heuristic collapse prevention methods, that might not directly be related to any entropy estimator.

Empirically, we find that the other entropy estimators (kNN, LogDet) are not suited for variables on the simplex and do not converge. 
Results for the contrastive approach are summarized in Table \ref{tab:A1}.

\subsubsection{Predictive \ac{stopgrad} models} \label{app:predstopgrad}

We can also understand predictive models with \ac{stopgrad} \cite{grill2020bootstrap,chen2020simple,bardes2024revisiting,assran2025v,mohammadi2025understanding} as approximately maximizing entropy. These models have the general goal function
\begin{align} \begin{split}
    \mathcal{F} = \sum_t \left\langle \log P_\theta(SG[z_t]|z_{:t})\right\rangle_{R(z_t,z_{:t})}\;,
\end{split} \end{align}
where commonly, but not necessarily, $P_\theta$ is a Normal distribution with either fixed or variable variance.
To recall, in the proposed framework of \ac{DM}, the goal for a temporal model is the same as before, with the general goal function
\begin{align} \begin{split}
    -D_{KL}&[R(\boldsymbol{z}) \parallel P_\theta(\boldsymbol{z})]  = \sum_t \left\langle \log P_\theta(z_t|z_{:t}) \right\rangle_{R(z_t,z_{:t})} + H_R[z_t|z_{:t}]\;.
\end{split} \end{align}

 To show the relation to predictive \ac{stopgrad} losses, we start with the basic observation that
\begin{align}
\begin{split}   
    \langle \log &P_\theta(SG[z_t]|z_{:t})\rangle_{R(z_t,z_{:t})} \overset{\partial}{=} \left\langle \log P_\theta(z_t|z_{:t}) \right\rangle_{R(z_t,z_{:t})}  - \left\langle  \log P_{SG[\theta]}(z_t|SG[z_{:t}])\right\rangle_{R(z_t,z_{:t})}\;.
\end{split}
\end{align}
where $\overset{\partial}{=}$ denotes equal in derivative. 
Thus, both goal functions match in derivative if the second term approximates the (derivative of the) conditional entropy of $R(z_t|z_{:t})$. 
First, we can understand $P_{SG[\theta]}(\cdot|\cdot)=\hat R(\cdot|\cdot)$ as a \textit{fixed} estimator of the conditional distribution $R(z_t|z_{:t})$, since it is only trained via the first term which attains a minimum if the conditional distributions match $P_{SG[\theta]}(\cdot|\cdot)= R(\cdot|\cdot)$. 
Note, that this requires to train $P_\theta$ on a faster timescale than the encoders. 
Some \ac{stopgrad} based approaches ensure this heuristically by employing exponentially moving average encoder targets in the goal function \cite{grill2020bootstrap,assran_hidden_2022}.
Note also that there is no \ac{stopgrad} on the average, since $R$ is not parametrized directly, but is defined through sampling $x$ and mapping from $x$ to $z$. 
We thus see that the second term is a sampling-based estimator of the conditional entropy (see also Section~\ref{app:condent})
\begin{align}
\begin{split}   
    \langle - & \log P_{SG[\theta]}(z_t|SG[z_{:t}])\rangle_{R(z_t,z_{:t})}  =\left\langle - \log \hat R(z_t|SG[z_{:t}])\right\rangle_{R(z_t,z_{:t})} \approx H_R[z_t|SG[z_{:t}]]\;.
\end{split}
\end{align}

The \ac{stopgrad} approach can therefore be seen to approximately maximize conditional entropy via $z_t$.
We can gain more insight by rewriting $H_R[z_t|SG[z_{:t}]]=H_R[z_t] - I_R[z_t,SG[z_{:t}]]$.
This shows that, in fact, the single timestep entropy is maximized fully, while the term that is only partially minimized is the (as we have argued, typically inconsequential) \ac{MI} term. 
This suggests that in principle it would also be feasible to train a separate (ideally, less constrained) predictor $\hat R_\phi(z_t|z_{:t})$ with parameters $\phi$ on a faster timescale and use this \textit{fixed} predictor to estimate the conditional entropy.
We leave this to be explored by future research.

\paragraph{BYOL/SimSiam}

We first apply this idea to models in the nontemporal setting. 
To derive the goal function of BYOL \cite{grill2020bootstrap} and SimSiam \cite{chen2020simple} we make the choices
\begin{align} \begin{split}
    P_\theta(z) &= R(z), \\
    P_\theta(z'|z) &= \frac{1}{Z}\exp(\tau p(z)^Tz')\;,
\end{split} \end{align}
where $p(\cdot)$ is the predictor, and latent variables are normalized to lie on the sphere.
Here an empirical prior on latent variables is assumed, similar to the model in \citet{aitchison_infonce_2023}. 
Future work might explore models with a proper prior to reduce the need for additional model regularization \cite{grill2020bootstrap}.
With the empirical prior $\mathcal{F}_\text{LDM}$ simplifies to
\begin{align} \begin{split}
    \mathcal{F}_\text{LDM} &= -D_{KL}[R(z'|z)R(z) \parallel  P_\theta(z'|z)R(z)] \\
    &=-\langle D_{KL}[R(z'|z) \parallel  P_\theta(z'|z)] \rangle_{R(z)} \\
    &= \langle \log P_\theta(z'|z) \rangle_{R(z,z')} + H_R[z'|z]\;.
\end{split} \end{align}
Here we employ the same trick as in~\ref{app:predstopgrad} and replace the conditional entropy with the \ac{stopgrad} entropy estimator. 
This recovers the original goal function
\begin{align} \begin{split}
    \mathcal{F}_\text{LDM} &\approx \langle \log P_\theta(z'|z) \rangle_{R(z,z')} - \langle \log P_{SG[\theta]}(z'|SG[z]) \rangle_{R(z,z')} \\
    &\overset{\partial}{=} \langle \log P_\theta(SG[z']|z) \rangle_{R(z,z')} \\
    &\propto \left\langle\tau p(z)^TSG[z'] \right\rangle_{R(z,z')}\;.
\end{split} \end{align}

\paragraph{Temporal Gaussian models}
The same argument can be made for temporal models, or models where the predictor is conditioned on multiple variables $z_{:t}$ \cite{bardes2024revisiting,assran2025v,mohammadi2025understanding}. To derive these methods we make the choices
\begin{align} \begin{split}
    P_\theta(z_0) &= R(z_0), \\
    P_\theta(z_t|z_{:t}) &= \mathcal{N}(z_t;\mu=p(z_{:t}),\Sigma=\sigma^2 I)\;,
\end{split} \end{align}
where $p(\cdot)$ now is a temporal/multi-input predictor, specified, e.g., by an RNN.
With the same argument as above we find the goal function
\begin{align} \begin{split}
    \mathcal{F}_\text{LDM} &= \sum_t \left\langle \log P_\theta(z_t|z_{:t}) \right\rangle_{R(z_t,z_{:t})} + H_R[z_t|z_{:t}] \\
    &\approx \left\langle \sum_t  \log P_\theta(z_t|z_{:t}) \rangle_{R(z,z')} -  \log P_{SG[\theta]}(z_t|SG[z_{:t}]) \right\rangle_{R(\boldsymbol{z})} \\
    &\overset{\partial}{=} \left\langle\sum_t \log P_\theta(SG[z_t]|z_{:t}) \right\rangle_{R(\boldsymbol{z})} \\
    &\propto\left\langle -\sum_t \frac{1}{2\sigma^2} \norm{ SG[z_t]-p(z_{:t}) } ^2\right\rangle_{R(\boldsymbol{z})}\;.
\end{split} \end{align}

\subsection{Proofs}
\subsubsection{Identifiability of predictive model} \label{app:ident}

Our goal is to show that under a predictive Gaussian model \ac{DM} recovers the original latent variables up to trivial transformations.
\begin{definition}[Data generation process]
We start by defining the true latent variable distribution
\begin{equation}\label{eq:app_linear_model}
    \begin{split}
        P(\boldsymbol{\tilde z}) &= \prod_t P(\tilde z_t|\tilde z_{:t}) \\
        P(\tilde z_t|\tilde z_{:t}) &= \mathcal{N}(\tilde z_t;p(\tilde z_{:t}),\Sigma)\;.
    \end{split}
\end{equation}
Note that here $p$ and $\Sigma$ can be time dependent, which we do not write down explicitly for compactness.
We assume an injective data generation process $g$, which together with the latent distribution defines the data distribution.
\end{definition}

\begin{definition}
The model is defined by the encoder $f$ and the data distribution. 
This leads to the transformation between true and recovered latent variables $h=f\circ g$.
Further, by assumption of a sufficiently flexible encoder $f$, after \ac{DM} to the form of (\ref{eq:app_linear_model}) the latent distribution has the form
\begin{equation}\label{eq:ap_latent_form}
    \begin{split}
        R(\boldsymbol{z}) &= \prod_t R(z_t|z_{:t}) \\
        R(z_t|z_{:t})& = \mathcal{N}(z_t;p_R(z_{:t}),\Sigma_R)\;.
    \end{split}
\end{equation}
Note that, technically speaking, $p_R$ is the predictor that is \textit{induced} by $f$. This is different from the predictor that would be learned in the target distribution of the model $P_\theta(\boldsymbol{z})$, but for a sufficiently flexible learned predictor they will be equal after \ac{DM}, and we don't make a difference here.
\end{definition}

\vspace{1cm}
\begingroup
\renewcommand{\thetheorem}{\ref{th:ident} (restated)}
\begin{theorem}
Assuming the following assumptions hold:
\begin{itemize}[nolistsep]
    \item[(i)] (Data generation process) Data is generated by the predictive latent process Eq.~\eqref{eq:app_linear_model} with invertible covariance $\Sigma$ and a differentiable and injective data generation function $g:\mathbb{R}^n\rightarrow \mathbb{R}^m$.
    \item[(ii)] (Distribution matching) The model is defined through a sufficiently flexible encoder $f:\mathbb{R}^m\rightarrow  \mathbb{R}^n$ to achieve perfect \ac{DM} (i.e., reach form~\ref{eq:ap_latent_form}), and $f$ is differentiable and invertible on the image of $g$, that is, $h=f\circ g$ is invertible.
    \item[(iii)] (Predictor covers the latent space) The Jacobian $J_{p_R}(h(z_{:t}))$ has full row rank, equal to the latent dimension $n$ (and so does $J_{p}(z_{:t})$). 
\end{itemize}
Then, distribution matching recovers the true latent variables up to affine transformations, that is, if the form of $R$ matches the form of $P$, the transforming function between latent variables is affine $h(z_t) = Az_t + c$.
\end{theorem}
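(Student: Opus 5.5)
We relate the true and learned conditional densities through the change of variables induced by $h=f\circ g$, and then use the Gaussian form of both to force the Hessian of $h$ to vanish. Since $z_t=h(\tilde z_t)$ for every $t$ and $h$ is invertible and differentiable, the conditional density of $z_t$ given $z_{:t}=h(\tilde z_{:t})$ is the pushforward of the true conditional. At the optimum of the \ac{DM} objective it equals the assumed Gaussian form \eqref{eq:ap_latent_form}. For all $\tilde z_t$ and all histories $\tilde z_{:t}$ this gives
\begin{equation*}
-\tfrac12 (\tilde z_t - p(\tilde z_{:t}))^\top \Sigma^{-1}(\tilde z_t - p(\tilde z_{:t})) = -\tfrac12 (h(\tilde z_t)-p_R(h(\tilde z_{:t})))^\top \Sigma_R^{-1} (h(\tilde z_t)-p_R(h(\tilde z_{:t}))) + \log|J_h(\tilde z_t)| + c,
\end{equation*}
with $c$ a normalization constant independent of $\tilde z_t$ and $\tilde z_{:t}$.

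\textbf{Step 1: isolate the cross terms.} Expand both quadratics. Collect the terms depending only on $\tilde z_t$ into a function $a(\tilde z_t)$; this includes the log-determinant and $\tfrac12 h^\top\Sigma_R^{-1}h$. Collect the terms depending only on the history into a function $b(\tilde z_{:t})$. What remains is the identity
\begin{equation*}
\tilde z_t^\top \Sigma^{-1} p(\tilde z_{:t}) = h(\tilde z_t)^\top \Sigma_R^{-1} p_R(h(\tilde z_{:t})) + a(\tilde z_t) + b(\tilde z_{:t}).
\end{equation*}

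\textbf{Step 2: eliminate the unknown functions.} Differentiate with respect to $\tilde z_t$, which removes $b$. Then differentiate with respect to $\tilde z_{:t}$, which removes $a$. This yields
\begin{equation*}
\Sigma^{-1} J_p(\tilde z_{:t}) = J_h(\tilde z_t)^\top \Sigma_R^{-1} J_{p_R}(h(\tilde z_{:t}))\, J_h(\tilde z_{:t}),
\end{equation*}
where $J_h(\tilde z_{:t})$ is the block-diagonal Jacobian over the history. The left-hand side does not depend on $\tilde z_t$. Fix a history and use assumption (iii): $J_{p_R}(h(\tilde z_{:t}))J_h(\tilde z_{:t})$ has full row rank $n$, so it admits a right inverse. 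Multiplying by that right inverse gives $J_h(\tilde z_t)^\top = \Sigma^{-1} J_p\, M$ for a matrix $M$ determined by the history alone. Hence $J_h(\tilde z_t)$ is constant in $\tilde z_t$.

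\textbf{Step 3: conclude affinity.} The argument of Step 2 requires that, for a fixed history, the identity holds on an open set of $\tilde z_t$. This is guaranteed because the Gaussian conditional has full support under the invertible $\Sigma$. A map with constant Jacobian on a connected domain is affine, so $h(z)=Az+c$. The matrix $A$ is invertible since $h$ is invertible.

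\textbf{Main obstacle.} The delicate point is Step 2's rank argument and its regularity requirements. The mixed derivative needs $h$, $p$ and $p_R$ to be differentiable, and (iii) needs to hold at some history, or on a dense set of histories. The cleaner route is to fix a single history where (iii) holds and use only $\tilde z_t$-variation. I would also check that time-dependent $p$ and $\Sigma$ cause no trouble, which seems fine since the argument is pointwise in $t$.
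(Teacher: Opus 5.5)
Your proposal is correct and follows essentially the same route as the paper. You use a change of variables through $h$ to equate the Gaussian log-conditionals, a mixed derivative in $\tilde z_t$ and the history to remove the single-variable terms, and the full-row-rank assumption (iii) to conclude that $J_h(\tilde z_t)$ is constant, so $h$ is affine. The only differences are minor: you get the per-timestep identity directly by pushing forward the conditionals (the paper pushes forward the joint and matches timesteps by induction), and your right-inverse step and open-set remark make explicit what the paper's final line asserts.
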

\endgroup
\begin{note}
We only require $h$ to be invertible, but not $f$ in general. This is important, since typically the data dimension is larger than the latent dimension $m>n$, making it impossible for $f$ to be invertible on the whole set $\mathbb{R}^m$.
\end{note}

\begin{note}
Assumption (iii) is required to prevent unconstrained dimensions in latent space in the last step of the proof.
It is related, but not equivalent to the predictor being invertible---e.g., if the predictor depends only on the last step $p_R(z_{:t})=p_R(z_{t-1})$, i.e., $J_{p_R}(h(z_{:t}))$ is square, then (3) implies $p_R$ is invertible, but for general predictors this assumption is a weaker constraint. 
\end{note}
\begin{proof}
Since $R$ is completely defined by $h=f\circ g$ and $P$, and $h$ is invertible and differentiable (Assumptions i \& ii), we can write $R$ based on the change of variables formula (note that we switch from $\tilde z$ to $z$ to simplify notation)
\begin{equation} \label{eq:proofmatch}
    \begin{split}
       \log R(h(\boldsymbol{z})) + \log |J_h(\boldsymbol{z})| &= \log P(\boldsymbol{z}) \\
       \log R(h(\boldsymbol{z})) + \sum_t \log |J_h(z_t)| &= \log P(\boldsymbol{z}) \\
         \sum_t \log R(h(z_t)|h(z_{:t})) +\log |J_h(z_t)| &= \sum_t \log P(z_t|z_{:t})\;,
    \end{split}
\end{equation}
where the second line follows since $h(\cdot)$ is applied to the timesteps individually. 
We notice that the terms have to match per individual timestep.
One way to see this is that, since the last representation $z_T$ only occurs once, the terms for $t=T$ have to match on both sides, and thus by induction (removing matching terms) all terms before.
From this, and by using the form of the transformed latent distribution (\ref{eq:ap_latent_form}) we find
\begin{equation} \label{eq:proofstepeucl}
    \begin{split}
         \log R(h(z_t)|h(z_{:t})) + \log |J_h(z_t)| &= \log P(z_t|z_{:t})\\
         - \log Z_R - \frac{1}{2}  \norm{ h(z_t) - p_R(h(z_{:t})) } ^2_{\Sigma_R^{-1}} + \log |J_h(z_t)| &= - \log Z_P - \frac{1}{2}  \norm{ z_t - p(z_{:t}) } ^2_{\Sigma^{-1}} \;,
    \end{split}
\end{equation}
where $Z$ are the normalization terms and we are using the weighted norm $ \norm x ^2_A=x^TAx$. 
Finally, taking derivatives with respect to present and past latent variables, constant and single variable terms drop out and we find
\begin{equation}
    \begin{split}
         \nabla_{z_t}\nabla_{z_{:t}}\left[\log R(h(z_t)|h(z_{:t})) + \log |J_h(z_t)|\right] &= \nabla_{z_t}\nabla_{z_{:t}}\log P(z_t|z_{:t})\\
         \nabla_{z_t}\nabla_{z_{:t}}  \norm{ h(z_t) - p_R(h(z_{:t})) } ^2_{\Sigma_R^{-1}} &= \nabla_{z_t}\nabla_{z_{:t}}   \norm{ z_t - p(z_{:t}) } ^2_{\Sigma^{-1}} \\
         \nabla_{z_t}\nabla_{z_{:t}}  \left[ (h(z_t) - p_R(h(z_{:t})))^T \Sigma_R^{-1} (h(z_t) - p_R (h(z_{:t}))) \right] &= \nabla_{z_t}\nabla_{z_{:t}}  \left[ (z_t - p(z_{:t}))^T \Sigma^{-1} (z_t - p(z_{:t})) \right] \\
\nabla_{z_t} \left[ -(J_{p_R}(h(z_{:t})) J_{h}(z_{:t}))^T \Sigma_R^{-1} (h(z_t) - p_R (h(z_{:t}))) \right] &= \nabla_{z_t} \left[ -J_{p}(z_{:t})^T \Sigma^{-1} (z_t - p(z_{:t})) \right] \\
-(J_{p_R}(h(z_{:t})) J_{h}(z_{:t}))^T \Sigma_R^{-1} J_h(z_t) &= -J_{p}(z_{:t})^T \Sigma^{-1} = \text{const. w.r.t. }z_t\;.
    \end{split}
\end{equation}
In the last line, since $J_{p_R}(h(z_{:t}))$ has full row rank (Assumption iii), and so have $J_{h}(z_{:t})$ ($h$ is invertible) and $\Sigma^{-1}$, it follows that $J_h(z_t)$ has to be constant w.r.t. $z_t$.
A function with a constant Jacobian is necessarily affine, and we thus conclude that the transformation between true and recovered latent variables is an affine transformation $h(z_t) = Az_t + c$. 
\end{proof}

\begin{note}
    In equation \ref{eq:proofmatch} we assume that the in the first timestep $t=0$ the uncoditional distributions match. Following the approach for conditional distributions, this could be ensured by assuming a Gaussian marginal distribution of $z_0$ and learning or enforcing the marginal. In practice it is sufficient to assume an empirical prior $P_\theta(z_0)=R(z_0)$ and hence ignore the first term, since recovery arises only from the conditional distributions.
\end{note}

The result is closely related to the Mazur-Ulam Theorem, which states that any surjective isometry (measure preserving map) between normed spaces is affine \cite{mazur1932transformations}. 
Importantly, linearity results entirely from the noise structure of the Gaussian, which is equivalent to the norm of the space in Mazur-Ulam. 
This argument does not directly extend to variable covariance models---in the Kalman filter with input-dependent covariance, recovery likely stems from the constrained linear predictor.

The result generalizes the results of \citet{zimmermann_contrastive_2022} and (most closely related) \citet{laiz2024self} under weaker assumptions. It does not only hold for a particular goal function, but for any algorithm that performs \ac{DM} while forcing the encoder to be injective.
This could in principle also be performed via the reverse KL divergence, with the caveats mentioned in~\ref{app:other}, or other divergences. 
Further, the proof does not require any particular marginal distribution of the latents, such as uniform, Gaussian, or a learned correction.

Finally, the affinity of $h$ leads to a straightforward followup observation.
\begin{corollary} \label{th:affinepredictor}
Under the assumptions of Theorem~\ref{th:ident}, and since the transformation function between latent variables is affine $h(z_t) = Az_t + c$, it follows that the learned predictor $p_R$ is an affine function of the true predictor $p$, i.e., $p_R(h(z_{:t}))= A p(z_{:t}) + c= h(p(z_{:t}))$. 
\end{corollary}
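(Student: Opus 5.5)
The plan is to identify the learned predictor through the distribution-matching identity itself, rather than through the Jacobian step at the end of the proof of Theorem~\ref{th:ident}. From the proof of Theorem~\ref{th:ident} we already have that $h$ is affine, $h(z)=Az+c$. Since $h$ is invertible (assumption (ii)), $A$ is invertible. Consequently $J_h=A$ is constant, and $\log|J_h(z_t)|=\log|A|$ is a constant.

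Substituting this into the per-timestep matching identity \eqref{eq:proofstepeucl}, for all $z_t$ and $z_{:t}$,
\[
-\tfrac12\norm{Az_t+c-p_R(h(z_{:t}))}^2_{\Sigma_R^{-1}} + \text{const} = -\tfrac12\norm{z_t-p(z_{:t})}^2_{\Sigma^{-1}} + \text{const}'.
\]
Both sides are quadratic polynomials in $z_t$ with $z_{:t}$ held fixed. I will compare their coefficients.

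\textbf{Quadratic coefficients.} Matching the terms quadratic in $z_t$ gives $A^T\Sigma_R^{-1}A=\Sigma^{-1}$, i.e.\ $\Sigma_R=A\Sigma A^T$. This is consistent with the change of variables but is not needed for the corollary.

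\textbf{Linear coefficients.} Matching the terms linear in $z_t$ gives
\[
A^T\Sigma_R^{-1}\big(p_R(h(z_{:t}))-c\big)=\Sigma^{-1}p(z_{:t}).
\]
Substituting $\Sigma_R^{-1}=A^{-T}\Sigma^{-1}A^{-1}$, this reduces to $\Sigma^{-1}A^{-1}(p_R(h(z_{:t}))-c)=\Sigma^{-1}p(z_{:t})$. Since $\Sigma$ is nondegenerate, it follows that $p_R(h(z_{:t}))=Ap(z_{:t})+c=h(p(z_{:t}))$.

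There is also a more conceptual route. The conditional law of $z_t=h(\tilde z_t)$ given $z_{:t}$ is the pushforward of $\mathcal N(p(\tilde z_{:t}),\Sigma)$ under the affine map $h$, which is $\mathcal N(Ap+c,A\Sigma A^T)$. Comparing its mean with the mean $p_R$ in \eqref{eq:ap_latent_form} gives the same conclusion. I would state this as the main argument and give the coefficient comparison as the verification.

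The main obstacle is justifying that the identity holds pointwise for every value of $z_{:t}$, so that the coefficients can be compared freely. This requires the matching to hold on the full support, which is guaranteed by the Gaussian (full-support) assumption together with continuity of $p$ and $p_R$. The remaining steps are routine.
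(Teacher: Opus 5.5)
Your proposal is correct, and the pushforward argument you plan to make primary is exactly the paper's proof: push $\mathcal{N}(p(z_{:t}),\Sigma)$ through $h(z)=Az+c$ to get $\mathcal{N}(Ap(z_{:t})+c,\,A\Sigma A^T)$, then identify its mean with $p_R(h(z_{:t}))$ from \eqref{eq:ap_latent_form}. Your coefficient comparison in \eqref{eq:proofstepeucl} is a valid extra check, but the quadratic match $\Sigma_R=A\Sigma A^T$ is not dispensable as you claim: you use it when you substitute $\Sigma_R^{-1}=A^{-T}\Sigma^{-1}A^{-1}$ in the linear step.
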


\begin{proof}
The true conditional distribution is $z_t \sim \mathcal{N}(p(z_{:t}), \Sigma)$.
Applying the affine transformation property of Gaussian random variables, the distribution of the transformed variable $h(z_t)$ is
\begin{equation}
    h(z_t) \sim \mathcal{N}(A p(z_{:t}) + c, A \Sigma A^T)\;.
\end{equation}
By definition (\ref{eq:ap_latent_form}), the model parametrizes this distribution as $\mathcal{N}(p_R(h(z_{:t})), \Sigma_R)$.
Matching the means of these two equivalent distributions yields
\begin{equation}
    p_R(h(z_{:t})) = A p(z_{:t}) + c = h(p(z_{:t}))\;.
\end{equation}
\end{proof}

\begin{remark} \label{th:linearpredictor}
    While these proofs concern the relation of latent variables $\tilde z_t=f(x_t)$ to true latent variables $z_t$, they do not tell us about the relation between the variables \textit{within} the predictor.
    These are, for example, the `hidden' latent variables $h_t$ of the Kalman filter (cf.\ Fig.~\ref{fig:kalman}), or the hidden variables in an RNN that are commonly used for downstream tasks \cite{oord_representation_2019}.
    Here we remark that with a (partially) linear relation between `hidden' and `observed' latent variables ($h_t$ and $z_t$, respectively), and if the transformation function $h$ between true and recovered `observed' latent variables is affine, then there exists a (partially) affine relation between true and recovered `hidden' variables.
\end{remark}
\begin{note}
Partially linear means that if we train the model based on a linear predictor $M$ from, e.g., $h_{t-1}$ to $z_t$, then only the row space of $M$ in $h_{t-1}$ is guaranteed to have a linear relation to $z_t$.
\end{note}
\begin{proof}
The composition of affine functions is affine. 
\end{proof}

Another straightforward insight is that theorem~\ref{th:ident} can be extended to include models similar to CPC under the assumption that the MI term can be ignored \citep[although even then CPC with the InfoNCE loss does not fully satisfy the assumptions,][]{aitchison_infonce_2023} with latent variables on the sphere.
\begingroup
\renewcommand{\thetheorem}{\ref{th:identCPC} (restated)}
\begin{theorem}
{\normalfont (Affine identifiability of vMF predictive model)}
    Under the assumptions (i)--(iii) of Theorem~\ref{th:ident}, and a von Mises-Fisher latent predictive distribution $P(z_t|z_{:t}) = \frac{1}{Z_P}\exp (\beta_P z_t^T p(z_{:t}))$ and a matching model, the transforming function $h$ between latent variables is affine.
\end{theorem}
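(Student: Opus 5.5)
We follow the structure of the proof of Theorem~\ref{th:ident} and change only the form of the per-timestep log-densities. By assumptions (i) and (ii), $h=f\circ g$ is invertible and differentiable. The change of variables formula together with the factorization over time then gives, exactly as in Eq.~\eqref{eq:proofmatch},
\begin{equation*}
\log R(h(z_t)\mid h(z_{:t})) + \log |J_h(z_t)| = \log P(z_t\mid z_{:t})
\end{equation*}
for every $t$, using the same induction backwards from $t=T$. The only caveat is that the latents now live on the unit sphere $S^{n-1}$. We therefore interpret $J_h$ as the differential of $h:S^{n-1}\to S^{n-1}$ restricted to tangent spaces. Equivalently, we extend $h$ to a neighbourhood of the sphere and work with ambient Jacobians that are projected onto tangent spaces.

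Next we substitute the vMF forms $\log R = -\log Z_R + \beta_R\, h(z_t)^T p_R(h(z_{:t}))$ and $\log P = -\log Z_P + \beta_P\, z_t^T p(z_{:t})$. We then apply the mixed derivative $\nabla_{z_t}\nabla_{z_{:t}}$. The normalizers are constant, because the vMF normalizer depends only on $\beta$ and not on the mean direction. The term $\log|J_h(z_t)|$ depends only on $z_t$, so it also drops out. Differentiating first in $z_{:t}$ gives
\begin{equation*}
\beta_R\,(J_{p_R}(h(z_{:t}))J_h(z_{:t}))^T h(z_t) = \beta_P\, J_p(z_{:t})^T z_t + (\text{terms independent of } z_t).
\end{equation*}
Differentiating this in $z_t$ gives
\begin{equation*}
\beta_R\,(J_{p_R}(h(z_{:t}))J_h(z_{:t}))^T J_h(z_t) = \beta_P\, J_p(z_{:t})^T .
\end{equation*}
The right-hand side does not depend on $z_t$.

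The inner-product form makes this step simpler than in the Gaussian case, since there is no quadratic term to expand. By assumption (iii), together with invertibility of $h$, the factor $(J_{p_R}J_h(z_{:t}))^T$ has a left inverse. Fixing $z_{:t}$, we conclude that $J_h(z_t)$ is constant in $z_t$ and equals some matrix $A$. Integrating along paths then gives $h(z_t)=Az_t+c$.

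The main obstacle is the spherical geometry. Derivatives with respect to $z_t$ are only defined along the tangent space $T_{z_t}S^{n-1}$, so the argument yields only that $J_h(z_t)P_{z_t}$ is constant, where $P_{z_t}$ is the tangent projector. We would handle this by extending $h$ and $p$ homogeneously, setting $h(\lambda z)=\lambda h(z)$. With this extension the identity $h(z_t)^T p_R = \ldots$ holds on a cone, and ambient derivatives make sense. Affinity of the extension then restricts to the sphere.

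A further point: an affine map that sends the sphere onto itself must have $c=0$ and $A$ orthogonal. So the conclusion is in fact an orthogonal transformation, which is consistent with \citet{zimmermann_contrastive_2022}. We would also note that assumption (i) must be read with the vMF predictive form in place of the Gaussian one.
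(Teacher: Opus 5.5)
Your main line is the paper's proof: per-timestep matching via change of variables, substitution of the vMF log-densities, the mixed derivative $\nabla_{z_t}\nabla_{z_{:t}}$, a left inverse supplied by (iii), and constant Jacobian $\Rightarrow$ affine. The paper's proof ignores that these latents live on $S^{n-1}$ (on $\mathbb{R}^n$, $\exp(\beta z_t^T p)$ is not even normalizable), so raising this goes beyond the paper. The trouble is that your remedy does not hold together.

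Write $M_R=J_{p_R}(h(z_{:t}))J_h(z_{:t})$ and $M=J_p(z_{:t})$, both along tangent directions of the past spheres. The homogeneous extension needs the $z_{:t}$-differentiated identity to read $\beta_R M_R^T h(z_t)=\beta_P M^T z_t$ with no extra term. Your only justification is that the normalizers are constant because the vMF normalizer depends only on $\beta$.

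\emph{Unit-norm reading.} Constant normalizers force $\norm{p_R(h(z_{:t}))}$ to be constant in $z_{:t}$. Differentiating gives $p_R^T M_R=0$, so $M_R$ has rank at most $n-1$ and $M_R^T$ has no left inverse. In this reading (iii) can never hold, and your left-inverse step is unavailable. The extension could only be saved by strengthening (iii): for instance, take rank $n-1$ and two pasts with non-parallel $p_R$, and stack the two identities.

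\emph{Unnormalized reading.} If $p$ is an unnormalized mean (concentration $\beta\norm{p}$), (iii) is satisfiable, and the now $z_{:t}$-dependent normalizers are still killed by the mixed derivative. But the identity becomes $\beta_R M_R^T h(z_t)=\beta_P M^T z_t+b(z_{:t})$ with $b=\nabla_{z_{:t}}(\log Z_R-\log Z_P)$. Extending by $h(\lambda z)=\lambda h(z)$ turns $b$ into $\norm{z}\,b$, and its derivative $b\,z^T/\norm{z}$ makes the ambient Jacobian non-constant.

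No extension is needed. In the unnormalized reading, the identity above holds pointwise for every $z_t\in S^{n-1}$. Applying the left inverse $L$ of $M_R^T$ therefore gives $h(z_t)=\tfrac{\beta_P}{\beta_R}LM^Tz_t+\tfrac{1}{\beta_R}Lb$ directly, without differentiating in $z_t$ at all.

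Equivalently, your tangential computation actually yields $Dh(z_t)v=Av$ for all $v\in T_{z_t}S^{n-1}$, with the fixed matrix $A=\tfrac{\beta_P}{\beta_R}LM^T$. So the correct statement is $J_h(z_t)P_{z_t}=AP_{z_t}$, which is not constant, rather than ``$J_h(z_t)P_{z_t}$ constant''. This already suffices: integrating along curves on the sphere gives $h(z_t)=Az_t+c$.

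Your closing observation is correct and sharpens the paper's conclusion: a bijective affine self-map of $S^{n-1}$ must have $c=0$ and $A$ orthogonal.
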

\endgroup
\begin{proof}
The proof remains the same until Equation~\ref{eq:proofstepeucl}, where we now find
\begin{equation} \label{eq:proofstepeucl2}
    \begin{split}
         \log R(h(z_t)|h(z_{:t})) + \log |J_h(z_t)| &= \log P(z_t|z_{:t})\\
         - \log Z_R + \beta_R h(z_t)^Tp_R(h(z_{:t}))+ \log |J_h(z_t)| &= - \log Z_P - \beta_P z_t^T p(z_{:t}) \;.
    \end{split}
\end{equation}
After differentiation with respect to $z_t$ and $z_{:t}$ we are left with
\begin{equation}
    \begin{split}
\beta_R(J_{p_R}(h(z_{:t})) J_{h}(z_{:t}))^T J_h(z_t) &= \beta_P J_{p}(z_{:t})^T  = \text{const. w.r.t. }z_t
    \end{split}
\end{equation}
and the same conclusion follows. 
\end{proof}

\subsubsection{Folding entropy} \label{app:folding_entropy}

In this section we outline a formal proof of the folding entropy formula, in order to provide additional intuition.
The formula holds under the following condition:

\begin{definition}[Piecewise invertible]
    An encoder is piecewise invertible on $\mathcal{M}$ if
    \begin{align}
        f(x) =
        \begin{cases}
          f_1(x) & \text{if } x \in \mathcal{X}_1\\
          f_2(x) & \text{if } x \in \mathcal{X}_2\\
          &\vdots
        \end{cases}     
    \end{align}
    and for all $\mathcal{X}_i\subset \mathcal{M}, x\in \mathcal{X}_i:f_i(x)$ invertible.
\end{definition}

\begin{note}
For a differentiable (Lipschitz) encoder with nonzero singular values piecewise invertibility automatically holds.
Since nonzero singular values are also encouraged by entropy maximization, and functions defined by neural networks are (mostly) differentiable, we can assume that during learning networks in LDM are piecewise invertible on the data manifold.
\end{note}

\begin{definition}[Folding entropy]
For an encoder $f$ and distribution $P$ on $\mathcal{M}$, the \emph{folding entropy} is
\begin{align}
    H_P[x | f(x)] := -\left\langle \log P(x | f(x)) \right\rangle_{P(x)} \;,
\end{align}
where $P(x | f(x) = z)$ is the conditional distribution over preimages. The folding entropy satisfies $H_P[x | f(x)] \geq 0$, with equality if and only if $f$ is injective on $\text{supp}(P)$.
\end{definition}

\begin{remark}
The folding entropy quantifies information lost due to non-injectivity: when $f$ maps distinct points to the same latent representation, uncertainty about the original point given its image increases.
For injective $f$, the conditional satisfies $P(x | f(x)) = 1$ everywhere, so $H_P[x | f(x)] = 0$ and the formula reduces to the standard manifold change of variables. Inversely, if the folding entropy is nonzero $H_P[x | f(x)] > 0$ it captures information lost when $f$ collapses distinct points.
\end{remark}

While the following formula has been proven previously by \citet{geiger2011information}, we here give a different proof to provide intuition.

\begin{lemma}[Folding entropy decomposition] \label{lem:change_of_variables}
Let $f: \mathcal{M} \to \mathbb{R}^n$ be a differentiable encoder with nonzero singular values, hence the Jacobian determinant of $f$ on $\mathcal{M}$ is non-zero $|J_f(x) J_f(x)^T|^{1/2}>0$ (required to avoid infinities).
Let $R(z) = \langle \delta(z - f(x)) \rangle_{P(x)}$ be the pushforward of $P$ under $f$. Then
\begin{align} \label{eq:entropy_decomposition}
    H_R[z] = H_P[x] + \left\langle\frac{1}{2}\log|J_f(x) J_f(x)^T|\right\rangle_{P(x)} - H_P[x | f(x)]\;.
\end{align}
\end{lemma}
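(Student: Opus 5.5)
Since $f$ is differentiable with nonvanishing $|J_fJ_f^T|$, it is a local diffeomorphism from $\mathcal{M}$ onto its image. The plan is therefore to use a piecewise-invertible decomposition, as in the Definition of piecewise invertibility, and reduce the statement to the manifold change of variables on each piece. I will cover $\mathcal{M}$, up to a $P$-null set, by countably many disjoint measurable pieces $\mathcal{X}_i$ on which $f_i=f|_{\mathcal{X}_i}$ is injective. This follows from the inverse function theorem together with a countable subcover and disjointification. On each piece the generalized change of variables (used in Section~\ref{app:mldm}) gives, for $z=f_i(x)$, the identity $P_i(z)=P(x)\,|J_f(x)J_f(x)^T|^{-1/2}$ for the pushforward density of $P$ restricted to $\mathcal{X}_i$. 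Summing over pieces gives the pushforward density
\begin{align*}
R(z)=\sum_{x\in f^{-1}(z)} P(x)\,|J_f(x)J_f(x)^T|^{-1/2}.
\end{align*}

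Next I identify the preimage conditional. For $z$ in the image, the conditional distribution of $x$ given $f(x)=z$ is discrete on the finite or countable set $f^{-1}(z)$. Its weights are
\begin{align*}
P(x\mid f(x)=z)=\frac{P(x)\,|J_f(x)J_f(x)^T|^{-1/2}}{R(z)}.
\end{align*}
I will justify this by a disintegration argument. For a test function $\phi(x)$, compute $\langle\phi(x)\rangle_{P}$ by summing over pieces, change variables to $z$ on each piece, and regroup as $\int R(z)\sum_{x\in f^{-1}(z)} w(x\mid z)\phi(x)\,dz$. This shows the weights $w$ above form the regular conditional.

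Taking logs of the weight formula gives the pointwise identity
\begin{align*}
\log R(f(x))=\log P(x)-\tfrac12\log|J_f(x)J_f(x)^T|-\log P(x\mid f(x)).
\end{align*}
Now average over $P(x)$. Because $R$ is the pushforward, $\langle\log R(f(x))\rangle_{P(x)}=\langle\log R(z)\rangle_{R(z)}=-H_R[z]$. The other terms give $-H_P[x]$, the Jacobian average, and $+H_P[x\mid f(x)]$ by definition. Rearranging yields \eqref{eq:entropy_decomposition}. Nonnegativity of the folding entropy follows because the conditional weights are discrete probabilities at most $1$.

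The main obstacle is the rigorous disintegration step. Two points need care. First, the fibers $f^{-1}(z)$ must be countable, which holds because the pieces are disjoint and each $f_i$ is injective. Second, all expectations must be finite so the logs can be split; I would simply assume integrability, as the paper implicitly does for $H_P[x]$ and the Jacobian term. The piecewise cover itself is routine, and boundaries between pieces can be taken $P$-null since $P$ has a density on $\mathcal{M}$.
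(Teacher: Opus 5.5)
Your proposal is correct and takes essentially the same approach as the paper. The paper also writes the pushforward density as $R(z)=\sum_{x\in f^{-1}(z)} r(x)$ with $r(x)=P(x)/|J_f(x)J_f(x)^T|^{1/2}$, identifies $P(x\mid f(x))=r(x)/R(f(x))$, and then takes logarithms and averages over $P(x)$; you additionally derive the piecewise-invertible cover from the inverse function theorem (the paper simply assumes piecewise invertibility) and justify the preimage weights by a disintegration argument, which makes explicit what the paper takes for granted.
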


\begin{proof}
We first find an expression of the latent distribution $R(f(x))$.
Define the local volume-adjusted density $r(x) := P(x) / |J_f(x) J_f(x)^T|^{1/2}$, which accounts for the change of volume element under $f$ \citep[][p.~125]{krantz2008geometric}. Given the assumption of piecewise invertible $f$, the pushforward density at $z \in f(\mathcal{M})$ is
\begin{align}
    R(z) = \sum_{x \in f^{-1}(z)} r(x)\;,
\end{align}
where the sum runs over all preimages of $z$. The conditional distribution over preimages is then the local density normalized by the contributions of all preimages
\begin{align}
    P(x | f(x)) = \frac{r(x)}{Z}= \frac{r(x)}{\sum_{x' \in f^{-1}(f(x))} r(x')} =\frac{r(x)}{R(f(x))}\;.
\end{align}
Rearranging gives $R(f(x)) = r(x) / P(x | f(x))$, and substituting the definition of $r(x)$ we get the desired expression
\begin{align}
    R(f(x)) = \frac{P(x)}{|J_f(x) J_f(x)^T|^{1/2}  P(x | f(x))}\;.
\end{align}

We can compute the latent entropy by first taking logarithms and then taking the average with respect to $x$
\begin{align}
    \log R(f(x)) = \log P(x) - \frac{1}{2}\log|J_f(x) J_f(x)^T| - \log P(x | f(x))\;,
\end{align}
leading to
\begin{align}
    H_R[z] &= -\langle \log R(f(x)) \rangle_{P(x)}\\
    &= -\left\langle \log P(x) \right\rangle_{P(x)} + \left\langle \frac{1}{2}\log|J_f(x) J_f(x)^T| \right\rangle_{P(x)} + \left\langle \log P(x | f(x)) \right\rangle_{P(x)} \notag \\
    &= H_P[x] + \left\langle \frac{1}{2}\log|J_f(x) J_f(x)^T| \right\rangle_{P(x)} - H_P[x | f(x)]\;,
\end{align}
where we identified $-\langle \log P(x) \rangle_{P(x)} = H_P[x]$ and $-\langle \log P(x | f(x)) \rangle_{P(x)} = H_P[x | f(x)]$.
\end{proof}

\subsection{Investigation of categorical model with probabilistic encoder} \label{ap:mnist}
We also wanted to understand if \ac{DM} in latent space allows for uncertainty in representations of single observations.
To this end we used the same non-temporal goal functions as before, but defined a recognition distribution $R(z|x)$ that is not a simple delta peak.

\paragraph{Categorical model}
To enable analytical tractability, we defined a simple categorical model that assigns individual probabilities to $n$ categories via a softmax encoder
We make the choices (with $n$ categories $k$)
\begin{align} \begin{split}
    R(z=k|x) &= p_k(x) = \frac{1}{Z(x)}\exp(f_k(x))\\
    P_\theta(z=k) &= \frac{1}{N}  \\
    P_\theta(z'=k'|z=k) &= \frac{1}{Z}\exp(\beta\mathbb{I}[k'=k])\;.
\end{split} \end{align}
This model assumes that if $z$ and $z'$ are encodings of related samples, they are assigned to the same category with probability $p_\theta=\exp(\beta)/(\exp(\beta) + n - 1)$. 
The likelihood term becomes 
\begin{align} \begin{split}
    & \left\langle \sum_{k=1}^n \sum_{k'=1}^n  \log P_\theta(z=k|z'=k') R(z=k|x)R(z'=k'|x') \right\rangle_{P_\text{data}(x,x')} \\
    &\propto \left\langle \sum_{k=1}^n \sum_{k'=1}^n  \beta\mathbb{I}[k'=k] \frac{1}{Z(x)}\exp(f_k(x)) \frac{1}{Z(x')}\exp(f_{k'}(x')) \right\rangle_{P_\text{data}(x,x')} \\
    &= \beta \left\langle \sum_{k=1}^n p_k(x) p_k(x') \right\rangle_{P_\text{data}(x,x')} \;.
\end{split} \end{align}
The entropy term can be computed analytically
\begin{align} \begin{split}
    H_R[z] &= - \sum_{k=1}^n R(z=k) \log R(z=k)\\
    &\approx - \sum_i \sum_{k=1}^n \frac{1}{Z(x_i)} \exp(f_k(x_i)) \log \left[\sum_j \frac{1}{Z(x_j)} \exp(f_k(x_j))\right]\;.
\end{split} \end{align}
This results in the goal function
\begin{align} \begin{split}
    \mathcal{F} &= \beta\sum_{k=1}^n  \left\langle R(z=k|x)R(z'=k|x') \right\rangle_{P_\text{data}(x,x')} + H_R[z,z'] \;.
\end{split} \end{align}

\paragraph{Simulations}
We tested the ability of the model to learn probabilistic representations on MNIST, by providing two different images of the same digit as input to the SSL algorithm.
We assumed $n=10$ categories, and that two presented digits can be detected to be in the same category with probability $p_\theta = 0.99$ by choosing $\beta$ in the model accordingly (Fig.~\ref{fig:mnist}A).
To prevent model collapse in the beginning of learning, we annealed $p_\theta$ over epochs to the final value, starting from $p_\theta = 0.8$.
After learning and sorting learned categories, the algorithm recovered the digit identity near perfectly with an accuracy of $0.99$ (Fig.~\ref{fig:mnist}B). 
In contrast, with additional \ac{MI} maximization the model would collapse categories under the correct latent model.
Instead, we had to assume a misspecified latent model with $p_\theta = 0.8$ that effectively counteracts \ac{MI} maximization to achieve reliable representation learning of digit identities.

We also found that representations with $\mathcal{F}_\text{LDM}$ had meaningful quantification of uncertainty:
Digits that were assigned to a single category (low entropy encodings) were consistently highly stereotypical, while digits with uncertain encoding (high entropy) were outliers without clear identity (Fig.~\ref{fig:mnist}C,D).
In comparison, with additional \ac{MI} maximization, the algorithm predominantly forms extremely low-entropy representations (Fig.~\ref{fig:A5a}).
Evidently, here MI is not maximized through the presence of an invertible encoder, and thus plays a non-neglegible role in shaping representations.

\ac{DM} is able to produce probabilistic encodings that respect epistemic uncertainty, which has previously been demonstrated by \citet{kirchhof2023probabilistic}.
Notably, while \citet{kirchhof2023probabilistic} employed the reverse KL divergence, which requires reparametrization sampling to evaluate the goal function, the approach here is simpler and does not require sampling.
Crucially, however, the motivation through the data likelihood is based on the assumption of invertible encoders, which clearly conflicts with probabilistic encoders.
Indeed, in preliminary experiments we found that it is not easily possible to optimize model parameters $\theta$ in the same way as it is for deterministic encoders, which likely is caused by this mismatch.
A theoretical framework for how flexible probabilistic models based on latent space prediction are related to \ac{DM} might be provided via more complex theoretical approaches in the future, e.g., recognition parametrized models \cite{walker2023unsupervised,hromadka2025maximum}.

\clearpage

\section{Simulation details}\label{app:simulation}

Details on parameters and precise implementation can be found in the simulation code at \href{https://github.com/fmi-basel/latent_distribution_matching}{\texttt{https://github.com/fmi-basel/latent\_distribution\_matching}}.

\paragraph{Learning of image representations.}

For experiments we extended the \texttt{solo-learn} library provided by \citet{solo}, which uses ResNet-18 as standard encoder and 1000/400 epochs of training for CIFAR/Imagenet-100.
Note that for Imagenet-100 we used the classes as used by \citet{tian2020contrastive}, which were different than those used by \citet{solo}.
For latents on the plane we used a Gaussian conditional model as described in~\ref{app:vicreg}; for latents on the sphere we used a vMF conditional model as described in~\ref{app:simclr}. 
We implemented the entropy estimators as described in~\ref{app:estimators}, with the following details. 
For contrastive entropy estimation via KDE (\ref{app:kde}) we used vMF kernels on the sphere, leading to the same loss as in \citet{chen2020simple}, and Gaussian kernels on the plane.
For non-contrastive entropy estimation (\ref{app:parametric}) on the plane, for comparability, we used the variance-covariance regularization as originally proposed in \citet{bardes_vicreg_2022}.
On the sphere we used the log-det expansion from~\ref{app:parametric}.
For kNN entropy estimation (\ref{app:knn}) we used the euclidean metric ($p=2$), chose $k=3$ and discarded the upper $10\%$ of kNN, which we considered outliers and already well separated.

\paragraph{Learning of a simple dynamical system with Kalman-based SSL.}

We randomly sampled 800 training sequences consisting of trajectories and noise and trained for 20 epochs---note that only the simulations in Fig.~\ref{fig:A4b} used noise in the trajectories.
Synthetic videos of 10 by 10 pixel resolution were then created according to the process outlined in the main text.
We specified the encoder through a simple MLP with ReLU activation and one hidden layer with 100 units. 
We specified the Kalman filter with diagonal covariances $\Sigma_A$ and $\Sigma_D$.
$D$ was specified, without loss of generality, as a fixed diagonal matrix, with or without zeros on the diagonal, depending on whether 'hidden' latent states were desired or not.
For both models using stopgrad (\ref{app:predstopgrad}) and kNN (\ref{app:knn}) entropy estimation, we learned the Kalman filter on faster timescales than the encoders.
For kNN entropy estimation, again, we used the euclidean metric ($p=2$), chose $k=3$ and discarded the upper $10\%$ of kNN, which we considered outliers and already well separated.

\paragraph{Modeling hippocampal activity dynamics with Kalman-based SSL.}

We modeled the data of the \texttt{hc-11} dataset \cite{grosmark2016diversity} based on the pre-processing as described in \citet{schneider2023learnable}, i.e., model inputs were spike-counts of 120 neurons in 25 ms bins. 
For one epoch we sampled 100000 windows of length $\sim$10 s from the full spike-train (about 45 min), training for 16 epochs.
For the Kalman filter we chose an 8D observation and 16D latent space. 
For the encoders we used the same setup as before.
We used kNN entropy estimation, with the same parameters as before.
For a linear position predictor the prediction distribution can be computed analytically.
To find 95\% confidence intervals for MLP position prediction, for each timestep we produced 1000 samples of the latent state distribution $\mathcal{N}_{h_t}(\mu_{h_t},\Sigma_{h_t})$ as given by the Kalman filter backbone. We then computed the predicted positions for these samples through the MLP (trained by predicting position from latent mean $\mu_{h_t}$), and found the 95\% sampling based confidence intervals.

\paragraph{Nonlinear systems identification with predictive distribution matching.}

We randomly sampled 10000 sequences and trained for 300 epochs (although convergence was observed much earlier).
The encoder was specified as an MLP with ReLU activation, 10 hidden layers, and 200 hidden units per layer. 
The predictor, mapping from all previous 2 dimensional encodings to the next encoding, was an LSTM RNN with an MLP predictor head.
The LSTM had 1 layer with 10 hidden units.
The predictor head was an MLP with ReLU activation, 2 hidden layers, and 20 hidden units.
In the main text we used kNN entropy estimation, with the same parameters as before.
In the Appendix Fig.~\ref{fig:A4.5} we used the stopgrad entropy estimator approximation as defined before, the LogDet entropy estimator with the \texttt{slogdet} function in pytorch, and the KDE estimator with bandwidth 1.
For the second experiment (Fig.~\ref{fig:A4.5b}) we used the same setup, training on 100000 samples for 20 epochs.

\paragraph{Probabilistic representations in categorical SSL.}

We used ResNet-18 as encoder, with an additional soft-max output layer, training for 30 epochs.

\clearpage

\section{Supplementary figures and tables}\label{app:figures}
\begin{table}[ht]
    \centering
    \caption{Validation accuracy for linear probing on CIFAR-10, CIFAR-100, SVHN, and Imagenet-100. 
    We used a standard SSL image pretraining setup as described by \citet{solo}.
    \textit{MI max.} denotes whether $\mathcal{F}_\text{LDM}$~($\circ$) or $\mathcal{F}_\text{MI}$~($\bullet$) was used. 
    Standard deviation was calculated based on three runs.
    Note that Imagenet-100 classes were selected according to \citet{tian2020contrastive}, which were different than those used by \citet{solo}, accounting for the difference in performance. The implementation of VICReg (Plane-LogDet-$\bullet$) is the same as in \citet{solo} and serves as a reference. 
    }
    \scriptsize
    \begin{tabular}{ccc|cc|cc|cc|cc}
    \hline
       \multicolumn{3}{c|}{Model}  &  \multicolumn{2}{c}{CIFAR-10 Acc.}&  \multicolumn{2}{c}{CIFAR-100 Acc.}& \multicolumn{2}{c}{SVHN Acc.}&  \multicolumn{2}{c}{Imagenet-100 Acc.}\\
        Space & Entropy est. & MI max. & Top-1& Top-5& Top-1& Top-5& Top-1& Top-5& Top-1& Top-5\\
         \hline
         \rowcolor{gray!10}
Plane & Contr. & $\circ$ & 91.87\tiny{$\pm0.06$} & 99.73\tiny{$\pm0.00$} & 65.28\tiny{$\pm0.28$} & 89.06\tiny{$\pm0.24$} & 92.15\tiny{$\pm0.17$} & 99.07\tiny{$\pm0.13$} & 72.60 & 92.36 \\
         \rowcolor{gray!10}
Plane & Contr. & $\bullet$ & 92.09\tiny{$\pm0.08$} & 99.76\tiny{$\pm0.04$} & 65.29\tiny{$\pm0.15$} & 88.96\tiny{$\pm0.25$} & 92.20\tiny{$\pm0.11$} & 99.11\tiny{$\pm0.03$} & 72.68 & 92.04 \\
Plane & kNN & $\circ$ & 92.07\tiny{$\pm0.17$} & 99.68\tiny{$\pm0.05$} & 65.63\tiny{$\pm0.36$} & 88.93\tiny{$\pm0.06$} & 92.75\tiny{$\pm0.13$} & 99.20\tiny{$\pm0.04$} & 74.32 & 93.36 \\
Plane & kNN & $\bullet$ & 91.89\tiny{$\pm0.05$} & 99.69\tiny{$\pm0.03$} & 65.85\tiny{$\pm0.38$} & 88.94\tiny{$\pm0.17$} & 92.85\tiny{$\pm0.11$} & 99.19\tiny{$\pm0.04$} & 73.68 & 92.84 \\
         \rowcolor{gray!10}
Plane & LogDet & $\circ$ & 92.06\tiny{$\pm0.05$} & 99.74\tiny{$\pm0.02$} & 69.47\tiny{$\pm0.21$} & 91.27\tiny{$\pm0.15$} & 92.79\tiny{$\pm0.13$} & 99.20\tiny{$\pm0.04$} & 75.92 & 93.32 \\
         \rowcolor{gray!10}
Plane & LogDet & $\bullet$ & 91.94\tiny{$\pm0.05$} & 99.74\tiny{$\pm0.02$} & 68.62\tiny{$\pm0.06$} & 90.77\tiny{$\pm0.19$} & 92.72\tiny{$\pm0.09$} & 99.13\tiny{$\pm0.04$} & 74.72 & 93.40 \\
Sphere & Contr. & $\circ$ & 90.93\tiny{$\pm0.13$} & 99.74\tiny{$\pm0.03$} & 64.83\tiny{$\pm0.19$} & 88.63\tiny{$\pm0.21$} & 91.59\tiny{$\pm0.05$} & 98.92\tiny{$\pm0.05$} & 72.12 & 92.12 \\
Sphere & Contr. & $\bullet$ & 91.38\tiny{$\pm0.22$} & 99.73\tiny{$\pm0.03$} & 66.01\tiny{$\pm0.24$} & 89.47\tiny{$\pm0.14$} & 92.33\tiny{$\pm0.12$} & 99.05\tiny{$\pm0.04$} & 73.08 & 92.72 \\
         \rowcolor{gray!10}
Sphere & kNN & $\circ$ & 90.22\tiny{$\pm0.04$} & 99.68\tiny{$\pm0.02$} & 64.31\tiny{$\pm0.07$} & 88.00\tiny{$\pm0.09$} & 92.27\tiny{$\pm0.12$} & 98.99\tiny{$\pm0.04$} & 72.64 & 92.16 \\
         \rowcolor{gray!10}
Sphere & kNN & $\bullet$ & 89.96\tiny{$\pm0.16$} & 99.63\tiny{$\pm0.04$} & 64.50\tiny{$\pm0.18$} & 87.82\tiny{$\pm0.12$} & 92.16\tiny{$\pm0.03$} & 99.05\tiny{$\pm0.02$} & 73.28 & 92.28 \\
Sphere & LogDet & $\circ$ & 91.41\tiny{$\pm0.21$} & 99.77\tiny{$\pm0.02$} & 65.40\tiny{$\pm0.18$} & 89.31\tiny{$\pm0.11$} & 92.57\tiny{$\pm0.02$} & 99.09\tiny{$\pm0.03$} & 73.00 & 93.00 \\
Sphere & LogDet & $\bullet$ & 91.20\tiny{$\pm0.11$} & 99.73\tiny{$\pm0.01$} & 65.37\tiny{$\pm0.16$} & 89.34\tiny{$\pm0.22$} & 92.54\tiny{$\pm0.04$} & 99.09\tiny{$\pm0.01$} & 72.28 & 92.40 \\
\rowcolor{gray!10}
Simplex & Contr. & $\bullet$ & 90.23\tiny{$\pm0.13$} & 99.69\tiny{$\pm0.03$} & 62.65\tiny{$\pm0.22$} & 86.98\tiny{$\pm0.12$} & 92.21\tiny{$\pm0.07$} & 99.03{$\pm0.05$} & 70.50 & 91.97 \\
        \hline
    \end{tabular} 
    \label{tab:A1}
\end{table}

\begin{figure}[!htbp]
    \centering
    \includegraphics{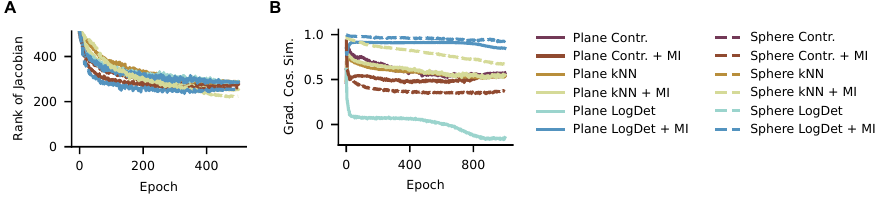}
    \caption{
    \textbf{A} The rank of encoder Jacobian reaches similar levels for all approaches (here, averaged over data-points on CIFAR-10).
    The rank is regularized by the entropy term, since without the entropy it collapses to zero.
    Note that even if the final rank exceeds the dimensionality of the data manifold (in this case estimated to be less than 100 dimensions) this does not imply local invertibility. 
    However, we tested for local invertibility on a toy dataset with known data generation process (detailed in Fig. \ref{fig:A4.5b}), by computing the rank of the combined generator-encoder function. This function was consistently full rank, implying local invertibility.
    \textbf{B} Similarity of single and joint entropy gradients of $\mathcal{F}_\text{MI}$ and $\mathcal{F}_\text{LDM}$, respectively, for CIFAR-100.
    Single and joint entropy gradients point into consistent directions throughout learning, with the exception of the LogDet estimator on the plane for $\mathcal{F}_\text{LDM}$.
    That is, when training with $\hat H[z,z']$ in $\mathcal{F}_\text{LDM}$, the estimated entropy $\hat H[z,z']$ has slightly opposing gradient to $\hat H[z]$ towards the end of the training (but not the other way around when training with $\mathcal{F}_\text{MI}$).
    The likely reason are the strong assumptions about the shape of the latent distribution by the LogDet estimator.
    }
    \label{fig:A3}
\end{figure}
\enlargethispage{3cm}
\vspace{1cm}
\begin{figure}[!htbp]
    \centering
    \includegraphics{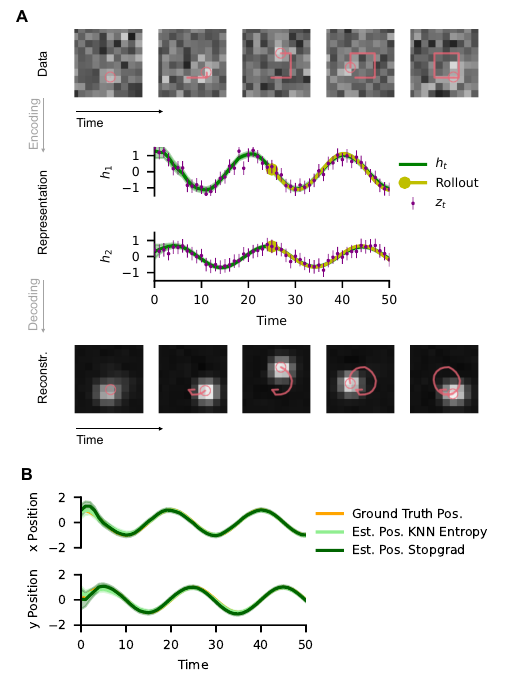}
    \caption{Square movement task.
    \textbf{A} A dot moving on a square trajectory is learned to be mapped to a linear dynamical system. The learned system captures the inferred dynamics, demonstrated by simulating a latent trajectory based on the latent state at $t=25$. We also show the reconstructed input (MLP decoder) and trajectory (linear decoder). Linear decoding does not enable decoding the correct trajectory.
    \textbf{B} To estimate decoding performance and uncertainty based on a linear decoder, we provide a surrogate true position, which is a sinusoidal with the frequency of the moving dot. Both contrastive and stopgrad predictive SSL find good encodings $R^2=0.97$.}
    \label{fig:A4a}
\end{figure}


\begin{figure}[htbp]
    \centering
    \includegraphics{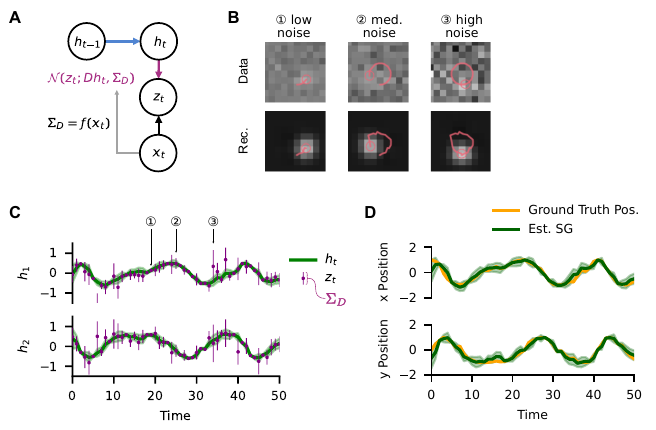}
    \caption{Kalman SSL with input dependent observation noise.
    \textbf{A} Observation noise (diagonal $\Sigma_D$) is estimated with an MLP with softplus head.
    \textbf{B} Data has randomly chosen noise level every timestep.
    \textbf{C} The model correctly infers the changing noise level and 
    \textbf{D} finds a good estimate of the latent trajectory in the hidden variables $h_t$ ($R^2=0.96$).}
    \label{fig:A4b}
\end{figure}

\begin{figure}[htbp]
    \centering
    \includegraphics{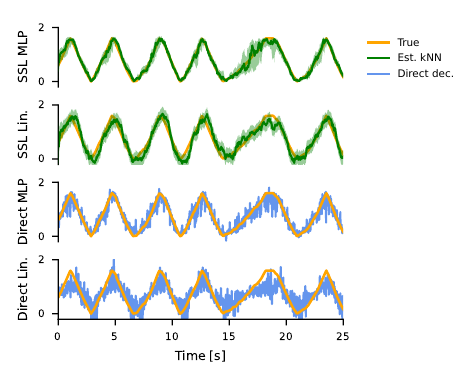}
    \caption{Comparison of position prediction through Kalman-based SSL or directly from the data. Kalman-based SSL makes highly accurate predictions with MLP decoder ($R^2=0.98$). With a linear decoder prediction becomes worse since at the turning points real position dynamics are not well approximated by a linear model ($R^2=0.88$, similar problem as in Fig.~\ref{fig:A4a}).
    For direct prediction we use the spikes binned in 25ms time windows, which leads to inferior performance of MLP ($R^2=0.88$) and Linear  ($R^2=0.57$) predictors.
    }
    \label{fig:A4c}
\end{figure}

\begin{figure}[htbp]
    \centering
    \includegraphics{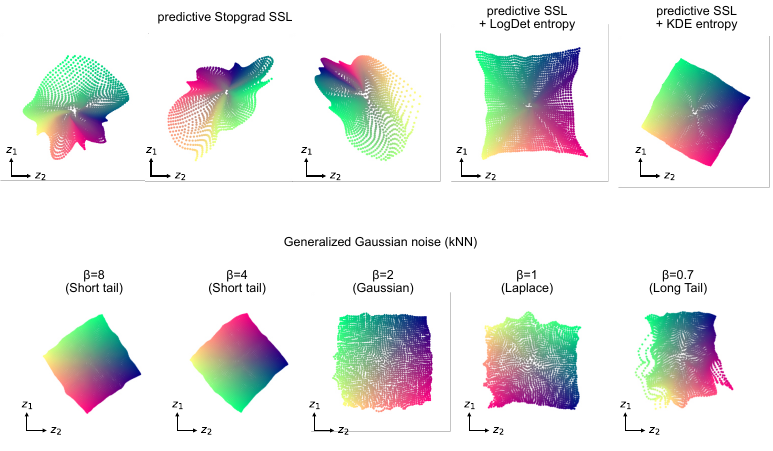}
    \caption{
    \textbf{Top}~\Ac{stopgrad} based predictive SSL leads to nonlinear identification of the underlying dynamics, which we exemplify with three learning outcomes (cf. Fig.~\ref{fig:identifiability}C).
    The resulting data representations are approximately locally linearlized on the latent plane, and learned dynamics are predictive of future representations (not shown), but, since \ac{stopgrad} based predictive SSL employs a heuristic entropy estimator (which already assumes the conditional distribution to be Gaussian), they retain nonlinear distortions compared to the true variables.
    Also, when using the LogDet entropy estimator (instead of kNN in the main paper) affine identification is only approximate.
    The reason is that for the underlying variables of this particular data the Gaussian assumption of the LogDet estimator does not hold, which thus leads to a biased entropy estimate.
    Such biased entropy estimates, such as in VICReg \cite{bardes_vicreg_2022} or SICReg \cite{balestriero2025lejepa}, could also be a problem in practice, even when exact identification is not required, since it can lead to a mismatch between empirical and model conditional distributions (e.g., the model makes Gaussian-distributed predictions, while the empirical conditional latent distribution has a different shape).
    For KDE entropy estimation we find similarly good results as for the kNN estimator.
    \textbf{Bottom}~Identifiability is robust to limited deviations from Gaussian noise in the conditional distribution of the data generation process. Here we replaced Gaussian noise with Generalized Gaussian noise with long or short tails. Approximate identification persisted for non-isotropic noise as displayed here (notice also the alignment of the recovered manifold with the axes).
    For isotropic noise, or for multi-modal conditional distributions, we did not observe the same robustness in general (not shown), suggesting that more complex latent models are needed in such cases.}
    \label{fig:A4.5}
\end{figure}

\begin{figure}[htbp]
    \centering
    \includegraphics{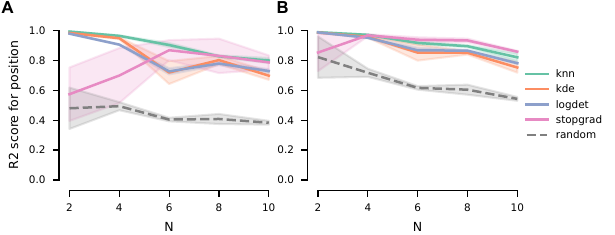}
    \caption{Higher-dimensional nonlinear source recovery task. We follow standard practice in identifiability literature \citep[e.g.][]{khemakhem2020ice}, with a nonlinear latent relation and an invertible DNN based generation function.  We sample two latent variables $z$, $z'$, where $z\sim\mathcal{N}(0,\mathbb{1})$ and $z'\sim\mathcal{N}(\mu(z),\alpha\mathbb{1})$, $\mu(z)=Rz+0.1\tanh(Rz)$, $R$ is a random rotation matrix. 100-dimensional observations are generated from $z$ and $z'$ via  an invertible (LeakyReLU) random 3-layer MLP.
    \textbf{A}~Recovery generally works well in low dimensions $N$ (except for \ac{stopgrad} models) and deteriorates for $N>4$. We suspect two reasons: 1) Higher $N$ require exponentially more samples to cover the latent space (for intuition, the $10^5$ samples in $N=10$ dimensions correspond to roughly 3 points per axis in a grid); 2) entropy maximization becomes more challenging, since 'unfolding' the latent distribution requires pushing the 'creases' through the distribution itself. This process slows down when there are more creases and dimensions. 
    Still, recovery consistently outperforms randomly initialized models.
    \textbf{B}~To test the second hypothesis, we trained overcomplete models with latent dimensionality $M=2\cdot N$, which allows probability mass to 'slip past' itself more easily. We find improved recovery and more reliable training for all models, even though now representations could in principle be nonlinearly related to true latents due to the mismatch in dimensionality. Shaded regions denote 95\% confidence intervals based on standard error for 5 runs.
    }
    \label{fig:A4.5b}
\end{figure}

\begin{figure*}[htbp]
    \centering
    \includegraphics{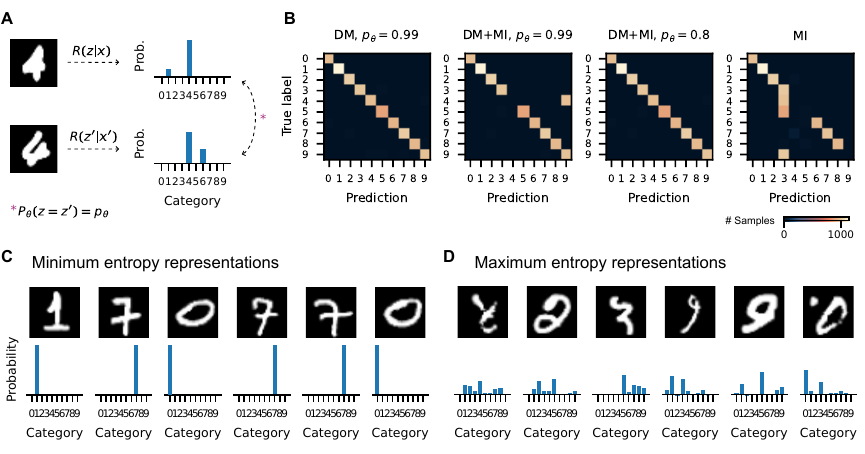}
    \caption{SSL based probabilistic representations of MNIST digits.
    \textbf{A}~After digits are encoded with softmax encoders $R(z|x)$, the algorithm encourages them to be in the same category with probability $p_\theta$.
    \textbf{B}~\ac{DM} recovers digit identities after sorting of categories with an accuracy of 0.99. With \ac{MI} maximization the model has to be mis-specified (low matching probability of $p_\theta=0.8$) to prevent collapse of digit identities.
    MI as an optimization goal alone shows poor results.
    \textbf{C}~Most certain digit encodings.
    \textbf{D}~Most uncertain digit encodings.}
    \label{fig:mnist}
\end{figure*}

\begin{figure}[htbp]
    \centering
    \includegraphics{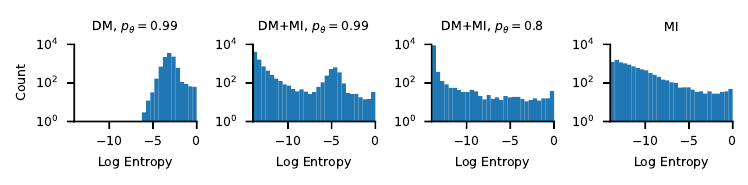}
    \caption{Histogram of entropies in digit encodings. \ac{DM} alone leads to moderate to high uncertainty in digit encodings, while (additional) MI maximization results in extremely low estimated uncertainties.}
    \label{fig:A5a}
\end{figure}
\end{document}